\newtheorem*{rep@theorem}{\rep@title}
\newcommand{\newreptheorem}[2]{%
\newenvironment{rep#1}[1]{%
 \def\rep@title{#2 \ref{##1}}%
 \begin{rep@theorem}}%
 {\end{rep@theorem}}}
\newtheorem{lemma}{Lemma}
\newtheorem{theorem}{Theorem}
\numberwithin{theorem}{section}
\title{Latent Nonlinear Denoising Score Matching for Enhanced Learning of Structured Distributions}
\date{December 6, 2025}	
\author{ \href{}{\hspace{1mm}Kaichen Shen} \\
	School of Mathematics\\
	Georgia Institute of Technology\\
	Atlanta, GA\\
	\texttt{kshen77@gatech.edu} \\
	\And
	\href{}{\hspace{1mm}Wei Zhu} \\
	School of Mathematics\\
	Georgia Institute of Technology\\
	Atlanta, GA\\
	\texttt{weizhu@gatech.edu} \\
}
\begin{document}
\maketitle

\begin{abstract}


    We present latent nonlinear denoising score matching (LNDSM), a novel training objective for score-based generative models that integrates nonlinear forward dynamics with the VAE-based latent SGM framework. This combination is achieved by reformulating the cross-entropy term using the approximate Gaussian transition induced by the Euler–Maruyama scheme. To ensure numerical stability, we identify and remove two zero-mean but variance exploding terms arising from small time steps. Experiments on variants of the MNIST dataset demonstrate that the proposed method achieves faster synthesis and enhanced learning of inherently structured distributions. Compared to benchmark structure-agnostic latent SGMs, LNDSM consistently attains superior sample quality and variability.
\end{abstract}

\keywords{Score-based generative modeling \and structure-preserving models \and denoising score-matching \and latent space \and learning from scarce data}


\section{Introduction}

Score-based generative models (SGMs) provide a flexible framework for learning complex data distributions by estimating the score of a diffusion process that links a simple reference distribution (such as Gaussian) to the data distribution~\cite{DDPM,song2021scorebased}. Compared to generative adversarial networks (GANs)~\cite{GAN} and variational autoencoders (VAEs)~\cite{VAE}, SGMs achieve high sample quality and avoid adversarial training instabilities~\cite{SGMs-beats-GANs}. They also admit an equivalent probability flow ODE representation, placing them within the broader class of flow-based generative models~\cite{FlowBasedSGM}.

In many applications, however, the target distribution exhibits inherent structural properties that are not exploited by standard generative models. Examples include group invariances in medical and scientific imaging, as well as multimodal or low-dimensional structure in dynamical and molecular systems. To incorporate such structure into generative models, specialized architectures or model adaptations must be employed. For instance, to generate distributions with intrinsic group symmetry (that is, a sample and its group-transformed copy have the same probability), symmetry is enforced in GANs through equivariant  generators and discriminators~\cite{deygroup,birrell2022structurepreservinggans}. For SGMs, analogous constraints are imposed using group-equivariant score networks~\cite{Diffusion-Models-under-Group-Transformations}. More recently, \cite{NDSM} proposed a more general and relaxed framework in which structure is incorporated by tailoring the reference distribution to share qualitative features with the target. This is achieved by using \textit{nonlinear} forward dynamics driven by a structured reference measure, which can significantly enhance the learning of structured distributions.

A complementary line of work improves SGMs by moving from the data space to a lower-dimensional latent space~\cite{LSGM,rombach2022highresolutionimagesynthesislatent,pinaya2022brain,blattmann2023align,xu2023geometric}. Mapping data to a smooth latent manifold reduces the computational cost of training and sampling while often improving expressivity~\cite{rombach2022highresolutionimagesynthesislatent}. A representative example is latent score-based generative modeling (LSGM)~\cite{LSGM}, which combines a variational autoencoder (VAE) with an SGM that learns the prior over latent variables. In LSGM, the latent prior is trained through a VAE-induced objective in which a denoising score-matching term appears as a cross entropy between the encoder distribution and the score-based prior.This formulation, however, relies on an \textit{linear} drift in the latent diffusion and uses a Gaussian reference distribution, which is not tailored to structural properties of the latent distribution such as multimodality. Moreover, standard LSGM requires importance sampling in time to control variance in the training objective, which complicates both the theory and the implementation.

The goal of this work is to unify the strengths of these two directions. We introduce latent nonlinear denoising score matching (LNDSM), a new training objective that brings nonlinear diffusions into the latent SGM framework. Our approach uses the Euler–Maruyama (EM) discretization of a general nonlinear diffusion in latent space and rewrites the VAE-induced cross entropy in terms of conditional Gaussian transitions. This allows us to replace the intractable marginal score by computable conditional scores. A central technical step is identifying and removing two variance-exploding terms arising from small EM time steps; subtracting these control variates yields a numerically stable objective. The encoder, decoder, and latent score network are then trained jointly using this LNDSM loss.

We demonstrate that our proposed LNDSM enhances the modeling of structured distributions and accelerates synthesis. Experiments on multimodal and approximately group symmetric dataset---especially in low-data regimes---show that LNDSM achieves lower Fr\'{e}chet inception distance (FID)~\cite{FID}, higher inception score (IS)~\cite{IS}, and significantly improved mode balance compared to LSGM, while requiring far fewer training time. These results indicate that combining structured latent reference distributions with nonlinear latent diffusions provides a powerful and efficient framework for learning structured data distributions.



\section{Background and Motivation}\label{section:2}


\subsection{Score-based Generative Models (SGMs)}\label{subsection: score-based generative modeling}


We briefly review the score-based generative modeling through stochastic differential equations (SDEs) \cite{song2021scorebased}. Consider a diffusion process $\{x_{t}\}_{t\in [0, T]}$, where for each $t\in[0,T]$, $x_t$ is a $\mathbb{R}^d$-valued random variable. The initial state $x_0\sim p_0 = p_{\text{data}}$ follows the unknown target data distribution. The terminal state $x_T\sim p_T$ is designed to be, or approximately follow, a simple reference distribution $\pi$ (typically a Gaussian) with a known and tractable density. The goal of generative modeling is to learn $p_{\text{data}}$ by generating new samples that are consistent with this distribution.

The forward noising process is modeled by the SDE
\begin{align}
    \label{eq:general forward SDE}
    & dx_{t} = f(x_{t}, t)dt + g(t)dw_{t},
\end{align}
where $f(\cdot, t):\mathbb{R}^{d}\rightarrow\mathbb{R}^{d}$ is the drift, $g(\cdot):\mathbb{R}\rightarrow\mathbb{R}$ is the diffusion coefficient and $w_{t}$ is the standard Wiener process. Under this dynamics, the data distribution $p_0$ is gradually transformed into the reference distribution $\pi\approx p_T$.

A fundamental result due to Anderson~\cite{anderson1982reverse} establishes that, under mild regularity conditions, the forward diffusion process admits an exact time reversal in distribution. In particular, the time-reversed process is again an It\^{o} diffusion whose drift contains an additional correction term involving the score of the marginal density. Specifically, the reverse-time SDE associated with the forward process takes the form
\begin{align}
    \label{eq:general reverse-time SDE}
    & dx_{t} = \left[ f(x_{t}, t) - g^{2}(t)\nabla_{x_{t}}\log p_{t}(x_{t}) \right] dt + g(t)d\bar{w}_{t},
\end{align}
where $\bar{w}_{t}$ is the standard Wiener process evolving backward from $T$ to $0$, and $\nabla_{x}\log p_{t}(x)$ denotes the \textit{score} of the marginal distribution at time $t$. This reverse-time formulation was first adopted in score-based generative modeling by Song et al.~\cite{song2021scorebased} and forms the basis for generative sampling.

Since the score $\nabla_{x}\log p_{t}(x)$ is generally unknown, it is typically approximated by a time-dependent neural network $s_{\theta}(x_{t}, t)$, which can trained via denoising score matching~\cite{DSM}:
\begin{equation}
    \label{eq:denoising score matching}
    \theta^{*} = \arg\min_{\theta}\mathbb{E}_{t}\left\{\lambda(t)\mathbb{E}_{x_{0}}\mathbb{E}_{x_{t}|x_{0}}\left[\rVert s_{\theta}(x_{t}, t) - \nabla_{x_{t}}\log p_{0t}(x_{t}|x_{0}) \rVert_{2}^{2}\right]\right\}.
\end{equation}
Here $t$ is sampled uniformly from $[0, T]$, $\lambda(t)$ is a positive weighting function, $x_{0}\sim p_{0}= p_{\text{data}}$ and $x_{t}\sim p_{0t}(\cdot|x_{0})$. Transition density $p_{0t}(x_{t}|x_{0})$ admits a closed-form expression only when the forward diffusion is an \textit{Ornstein–Uhlenbeck (OU) process}, that is, when the drift is linear in $x_t$ of the form  $f(x_{t}, t) = -\alpha(t)x_{t}$. In this case, the forward process is Gaussian with explicitly known mean and covariance, which makes the denoising score matching objective tractable. This linear (OU) structure is therefore a key prerequisite for the practical implementation of classical denoising score matching in SGMs.

After training, the intractable score $\nabla_{x_{t}}\log p_{t}(x_{t})$ in the reverse-time SDE is replaced by its neural approximation $s_{\theta^{*}}(x_{t}, t)$. Numerical simulation of the reverse-time SDE then produces samples from the learned approximation of the target distribution.


\subsection{Latent Score-based Generative Models (LSGMs)}\label{subsection: LSGM}

We next review the extension of SGM to a latent space, known as the latent score-based generative models~\cite{LSGM}. By operating in a lower-dimensional latent space, LSGM improves both the synthesis speed and the expressivity of standard SGMs. Conceptually, LSGM combines a variational autoencoder (VAE) with score-based generative modeling by learning the latent prior through an SGM rather than prescribing it a priori. Specifically, the encoder $q_{\phi}(z_{0}|x)$ maps a data point $x$ in the original space to a distribution over a latent variable $z_{0}$ in a lower-dimensional latent space, while the decoder $p_{\psi}(x|z_{0})$ maps $z_{0}$ back to a distribution over $x$. The parameters $\phi$ and $\psi$ are learned jointly from data.

In a standard VAE, the prior of the latent variable $z_{0}$ is fixed in advance, typically as a standard Gaussian $\mathcal{N}(0, I)$. In contrast, LSGM learns the latent prior using score-based generative modeling in the latent space. Specifically, i.i.d. samples from the data distribution are first mapped to the latent space and treated as samples from a latent target distribution, which is approximated by an SGM. This learned prior is denoted as $p_{\theta}(z_{0})$, where $\theta$ denotes the parameters of the score network. The encoder, decoder, and score network parameters $\{\phi, \theta, \psi\}$ are trained jointly. For a data point $x$, the training loss is given by
\begin{align}
    \nonumber
    \mathcal{L}(x, \phi, \theta, \psi) &= \mathbb{E}_{q_{\phi}(z_{0}|x)}[-\log p_{\psi}(x|z_{0})] + D_{\text{KL}}(q_{\phi}(z_{0}|x) \rVert p_{\theta}(z_{0})) \\
    \label{eg:one-point loss LSGM}
    &= \mathbb{E}_{q_{\phi}(z_{0}|x)}[-\log p_{\psi}(x|z_{0})] + \mathbb{E}_{q_{\phi}(z_{0}|x)}[\log q_{\phi}(z_{0}|x)] + \text{CE}(q_{\phi}(z_{0}|x) \rVert p_{\theta}(z_{0})).
\end{align}
In~\eqref{eg:one-point loss LSGM}, the first term is the reconstruction loss, the second term is the negative encoder entropy, and the third term is the cross entropy between the prior given by the encoder and the prior learned by the SGM in the latent space. Furthermore, this cross-entropy term admits a denoising score-matching form,
\begin{align}
    \label{eq:cross entropy}
    \text{CE}(q_{\phi}(z_{0}|x) \rVert p_{\theta}(z_{0})) &= \mathbb{E}_{q_{\phi}(z_{0}|x)}[-\log p_{\theta}(z_{0})] \\
    \nonumber
    &= \mathbb{E}_{t}\left[\frac{g^{2}(t)}{2}\mathbb{E}_{q_{\phi}(z_{t}, z_{0}|x)}\left[\rVert s_{\theta}(z_{t}, t) - \nabla_{z_{t}}\log p_{0t}(z_{t}|z_{0}) \rVert_{2}^{2}\right]\right] + c,
\end{align}
where $t$ is uniformly sampled from $[0, T]$, $q_{\phi}(z_{t}, z_{0}|x) = p_{0t}(z_{t}|z_{0})q_{\phi}(z_{0}|x)$, and $c$ is a constant. Note that expression~\eqref{eq:cross entropy} is implementable only when the latent diffusion has a drift that is \textit{linear} in $z_{t}$, ensuring that the transition density $p_{0t}(z_t|z_0)$ is Gaussian and tractable.


\subsection{Nonlinear Denoising Score Matching (NDSM)}\label{subsection:NDSM}

Another way to improve SGM is to explicitly incorporate the structural properties of the target distribution. In this subsection, we review the score-based generative modeling framework with nonlinear dynamics proposed in~\cite{NDSM}, which we refer to as NDSM-SGM. This method enhances the learning of structured distributions by introducing a reference distribution $\pi(x)$ that shares qualitative structural features with the target distribution $p_{0}(x)$. A Langevin dynamics that converges to the reference distribution $\pi(x)$ is then chosen as the forward noising process:
\begin{align}
    \label{eq:nonlinear forward SDE text}
    dx_{t} = -\nabla_{x_{t}}V(x_{t})dt + \sqrt{2}dw_{t},\;V(x) := -\log(\pi(x)),
\end{align}
so that $\pi(x)$ is the invariant measure of the diffusion. This dynamics transports the target distribution $p_0(x)$ toward a structured reference distribution that encodes prescribed qualitative features, such as multimodality or approximate symmetries. Importantly, the reference distribution is \textit{not required to share exactly the same structure} as the target; instead, it provides a relaxed structural bias that guides the diffusion.

Because the drift term $-\nabla_{x_{t}}V(x_{t})$ is generally nonlinear, the transition density $p_{0t}(x_{t}|x_{0})$ does not admit a closed-form expression, rendering classical denoising score matching inapplicable. NDSM addresses this by applying the Euler–Maruyama (EM) discretization to a general forward SDE to obtain an approximate discrete-time Markov process $\{x_{n}\}_{n=0}^{n_{f}}$, with $t_{0}=0$ and $t_{n_{f}}=T$:
\begin{align}
    \label{eq:EM Markov process}
    x_{0} &\sim p_{0}(x), \\
    x_{n} &\approx x_{n-1} + f(x_{n-1}, t_{n-1})\Delta t_{n-1} + g(t_{n-1})\sqrt{\Delta t_{n-1}}U_{n-1} \nonumber \\
    &:= \mu(x_{n-1}, t_{n-1}, \Delta t_{n-1}) + \sigma(x_{n-1}, t_{n-1}, \Delta t_{n-1})U_{n-1},\; 1 \leq n \leq n_{f},
\end{align}
where $\Delta t_{n-1} = t_{n} - t_{n-1}$ and $U_{n-1}\sim\mathcal{N}(0, I)$. When $\Delta t_{n-1}$ is sufficiently small, the conditional distribution satisfies
\begin{align}
    \label{eq:EM Markove process transition}
    p(x_n|x_{n-1})= \mathcal{N}\left(x_n;\mu(x_{n-1}, t_{n-1}, \Delta t_{n-1}), \sigma^{2}(x_{n-1}, t_{n-1}, \Delta t_{n-1})I\right),
\end{align}
Based on this Markov approximation, NDSM replaces the intractable expected $\ell^2$ error between the score network and the marginal score by a computable surrogate objective:
\begin{align}
    \label{eq:NDSM}
    &\arg\min_{\theta}\frac{1}{2}\mathbb{E}_{N}\left[\rVert s_{\theta}(x_{N}, t_{N}) - \nabla_{x_{N}}\log p_{N}(x_{N}) \rVert_{2}^{2}\right] = \arg\min_{\theta}\mathbb{E}\left[\mathcal{L}_{\theta, N}^{\text{NDSM}}\right],
\end{align}
where
\begin{align*}
    \mathcal{L}_{\theta, N}^{\text{NDSM}} := \frac{1}{2}\rVert s_{\theta}(x_{N}, t_{N}) \rVert_{2}^{2} + \frac{U_{N}\cdot\left[s_{\theta}(x_{N}, t_{N}) - s_{\theta}(\mu(x_{N-1}, t_{N-1}, \Delta t_{N-1}), t_{N})\right]}{\sigma(x_{N-1}, t_{N-1}, \Delta t_{N-1})}.
\end{align*}
Here $N$ is a random time index, uniformly distributed over $\{1, \cdots, n_{f}\}$ and independent of $x_{0}$ and the Gaussian noises $\{U_{n}\}$. This construction yields a tractable training objective for SGMs with nonlinear forward dynamics while effectively incorporating structural information through the choice of the reference distribution $\pi(x)$.



While NDSM-SGM effectively incorporates structural information through nonlinear forward dynamics, it requires simulating SDEs in the full ambient space, which leads to substantial computational cost. In contrast, LSGM achieves significant improvements in synthesis speed by operating in a lower-dimensional latent space. Moreover, in many applications the intrinsic structure of the data may be represented more naturally and compactly in the latent space than in the original data space. These considerations naturally motivate combining the structural advantages of NDSM-SGM with the computational efficiency of LSGM, leading to our proposed latent nonlinear denoising score-matching framework.


\section{Nonlinear Denoising Score Matching in Latent Space}\label{section:3}

The goal of this work is to extend NDSM-SGM (section~\ref{subsection:NDSM}) to the latent space using the VAE-based training framework of LSGM (section~\ref{subsection: LSGM}). Following the same notation as LSGM, we denote the encoder by $q_{\phi}(z_{0}|x)$ and the decoder by $p_{\psi}(x|z_{0})$. We consider a general (nonlinear) diffusion process in the latent space of the form
\begin{align}
    \label{eq:generic diffusion process in the latent space}
    dz_{t} = f(z_{t}, t)dt + g(t)dw_{t}.
\end{align}

Using the same principle as in the NDSM framework, we impose a relaxed structural constraint on the latent diffusion by requiring it to converge to a prescribed reference distribution in the latent space. This reference distribution is designed to share \textit{similar, but not necessarily identical}, structural characteristics with the target \textit{latent} distribution induced by the encoder. We retain the VAE-based training objective, in analogy with~\eqref{eg:one-point loss LSGM}:
\begin{align}
    \label{eq:new VAE-induced loss function}
    \mathbb{E}_{x\sim p_{\text{\normalfont data}}}[\mathcal{L}(x, \phi, \theta, \psi)] =& \mathbb{E}_{x\sim p_{\text{\normalfont data}}}\left\{\mathbb{E}_{q_{\phi}(z_{0}|x)}\left[-\log p_{\psi}(x|z_{0})] + \mathbb{E}_{q_{\phi}(z_{0}|x)}[\log q_{\phi}(z_{0}|x)\right] + \mathbb{E}_{q_{\phi}(z_{0}|x)}\left[-\log p_{\theta}(z_{0})\right]\right\} \nonumber \\
    =& \mathbb{E}_{x\sim p_{\text{\normalfont data}}}\left\{\mathbb{E}_{q_{\phi}(z_{0}|x)}\left[-\log p_{\psi}(x|z_{0})\right] + \mathbb{E}_{q_{\phi}(z_{0}|x)}\left[\log q_{\phi}(z_{0}|x)\right] + \text{CE}(q_{\phi}(z_{0}|x) \rVert p_{\theta}(z_{0}))\right\}.
\end{align}
Here, the latent prior $p_{\theta}(z_{0})$ in the cross-entropy term is now learned via NDSM-SGM in the latent space, with $\theta$ being the parameter of the latent score network. The modeling of this cross entropy term is detailed in the following subsection.


\subsection{Cross Entropy Term}

The main theoretical and algorithmic contribution of this work is the integration of NDSM into the cross-entropy term of the VAE training objective~\eqref{eq:new VAE-induced loss function}.We first show (see Lemma~\ref{lemma:lemma1} in Appendix~\ref{section:appendix A} for details) that the cross-entropy term $\text{CE}(q_{\phi}(z_{0}|x) \rVert p_{\theta}(z_{0}))$ in the loss function~\ref{eq:new VAE-induced loss function} can be expressed as
\begin{align}
    \text{CE}(q_{\phi}(z_{0}|x) \rVert p_{\theta}(z_{0})) = & H(q_{\phi}(z_{T}|x)) \nonumber +  \frac{1}{2}\int_{0}^{T}g^{2}(t)\mathbb{E}_{q_{\phi}(z_{t}|x)}\left[ \rVert \nabla_{z_{t}}\log q_{\phi}(z_{t} | x)  - s_{\theta}(z_{t}, t) \rVert_{2}^{2} \right]dt \nonumber \\
    & +\frac{1}{2}\int_{0}^{T}\mathbb{E}_{q_{\phi}(z_{t}|x)}\left[ \left[2f(z_{t}, t) - g^{2}(t)\nabla_{z_{t}}\log q_{\phi}(z_{t}|x) \right]^{T}\nabla_{z_{t}}\log q_{\phi}(z_{t} | x) \right]dt,
\end{align}
where $H(q_{\phi}(z_{T}|x))$ is the entropy of the chosen latent reference distribution. Since the latent diffusion process~\eqref{eq:generic diffusion process in the latent space} has a nonlinear drift, the exact form of the density $q_{\phi}(z_{t} | x)$ is generally intractable. As in the NDSM framework in section~\ref{subsection:NDSM}, we therefore replace the intractable marginal density by a computable (local) conditional density obtained through time discretization. Specifically, we simulate the latent diffusion using the Euler–Maruyama (EM) scheme, which yields a discrete-time Markov process $\{z_{n}\}_{n=0}^{n_{f}}$, with $t_{0}=0$ and $t_{n_{f}}=T$:
\begin{align}
    \label{eq:EM Markov process 1}
    z_{0} &\sim q_{\phi}(z_{0}|x)p_{0}(x), \\
    z_{n} &\approx z_{n-1} + f(z_{n-1}, t_{n-1})\Delta t_{n-1} + g(t_{n-1})\sqrt{\Delta t_{n-1}}U_{n-1} \nonumber \\
    &:= \mu(z_{n-1}, t_{n-1}, \Delta t_{n-1}) + \sigma(z_{n-1}, t_{n-1}, \Delta t_{n-1})U_{n-1},\; 1 \leq n \leq n_{f} \nonumber, \\
    \label{eq:EM Markov process 2}
    &:= \mu_{n-1} + \sigma_{n-1}U_{n-1},\; 1 \leq n \leq n_{f},
\end{align}
where $\Delta t_{n-1} = t_{n} - t_{n-1}$ is sufficiently small, and $U_{n-1}\sim\mathcal{N}(0, I)$. Introducing a random time index $N$, uniformly distributed over $\{0, \cdots, n_{f}\}$, the cross-entropy term $\text{CE}(q_{\phi}(z_{0}|x) \rVert p_{\theta}(z_{0}))$ can be approximated by:
\begin{align}
    \label{eq: main text cross entropy}
    \text{CE}(q_{\phi}(z_{0}|x) \rVert p_{\theta}(z_{0}))\approx & H(q_{\phi}(z_{n_{f}}))  +  \underbrace{\frac{1}{2}\mathbb{E}_{N, z_{N}}\left[ g^{2}(t_{N}) \rVert \nabla_{z_{N}}\log q_{\phi}(z_{N}) - s_{\theta}(z_{N}, t_{N}) \rVert_{2}^{2} \right]}_{\text{(I)}} \nonumber \\
    \quad + & \underbrace{\frac{1}{2}\mathbb{E}_{N, z_{N}}\left[ \left[ 2f(z_{N}, t_{N}) - g^{2}(t_{N})\nabla_{z_{N}}\log q_{\phi}(z_{N}) \right]^{T} \nabla_{z_{N}}\log q_{\phi}(z_{N}) \right]}_{\text{(II)}}
\end{align}
All continuous-time variables are thus replaced by their discrete-time counterparts.

The remaining task is to convert the marginal score $\nabla_{z_{N}}\log q_{\phi}(z_{N})$ in~\eqref{eq: main text cross entropy} into the conditional score $\nabla_{z_{N}}\log q_{\phi}(z_{N} | z_{N-1})$, which is tractable since the conditional distribution $z_{N} | z_{N-1}$ is approximately Gaussian if the discretization of time is sufficiently fine. Note that the squared-norm term (I) in~\eqref{eq: main text cross entropy} can be expanded: the squared norm of $s_{\theta}(z_{N}, t_{N})$ is computed by the score network directly, and the squared norm of $\nabla_{z_{N}}\log q_{\phi}(z_{N})$ is cancelled by the the corresponding term in (II) of Eq.~\eqref{eq: main text cross entropy}. Therefore, the remaining terms containing the the unknown marginal score $\nabla_{z_{N}}\log q_{\phi}(z_{N})$ are:

\noindent
\begin{align}
    \label{eq: remaining terms marginal score 1}
    &A =  \mathbb{E}_{N, z_{N}} \left[ g^{2}(t_{N}) s_{\theta}(z_{N}, t_{N}) \cdot \nabla_{z_{N}}\log q_{\phi}(z_{N}) \right], \\
    \label{eq: remaining terms marginal score 2}
    & B = \mathbb{E}_{N, z_{N}} \left[ f(z_{N}, t_{N}) \cdot \nabla_{z_{N}}\log q_{\phi}(z_{N}) \right].
\end{align}
Using a derivation similar to those in~\cite{DSM,song2021scorebased,DDPM}, the marginal score $\nabla_{z_{N}}\log q_{\phi}(z_{N})$ appearing in~\eqref{eq: remaining terms marginal score 1} and~\eqref{eq: remaining terms marginal score 2} admits an alternative expression in terms of the conditional score $\nabla_{z_{N}}\log q_{\phi}(z_{N}\mid z_{N-1})$; see Appendix~\ref{section:appendix B} for the detailed derivations. With this representation, \eqref{eq: remaining terms marginal score 1} and~\eqref{eq: remaining terms marginal score 2} can be written as:
\begin{align}
    \label{eq: remaining terms conditional score 1}
    & A = \mathbb{E}_{N, z_{N-1}, z_{N}} \left[ g^{2}(t_{N}) s_{\theta}(z_{N}, t_{N}) \cdot \nabla_{z_{N}}\log q_{\phi}(z_{N} | z_{N - 1}) \right], \\
    \label{eq: remaining terms conditional score 2}
    & B = \mathbb{E}_{N, z_{N-1}, z_{N}} \left[ f(z_{N}, t_{N}) \cdot \nabla_{z_{N}}\log q_{\phi}(z_{N} | z_{N-1}) \right],
\end{align}
where
\begin{align}
    \label{eq: text closed form transition}
    \nabla_{z_{N}}\log q_{\phi}(z_{N} | z_{N-1}) \approx - \frac{U_{N - 1}}{\sigma_{N - 1}} = -\frac{U_{N - 1}}{g(t_{N - 1})\sqrt{\Delta t_{N - 1}}}.
\end{align}
Note that to have an accurate Gaussian approximation of the conditional score $\nabla_{z_{N}}\log q_{\phi}(z_{N} | z_{N-1})$, $\Delta t_{N - 1}$ must be sufficiently small.  However, since the conditional score scales as $1/\sqrt{\Delta t_{N-1}}$, this choice leads to terms of large magnitude and consequently high variance.

To mitigate this issue, we identify and remove the following two dominant \textit{mean-zero} terms from~\eqref{eq: remaining terms conditional score 1} and~\eqref{eq: remaining terms conditional score 2}  via first-order Taylor expansions of $s_{\theta}(z_{N}, t_{N})$ and $f(z_{N}, t_{N})$ about $(\mu_{N-1}, t_{N})$:
\begin{align}
    \label{eq: control variates}
    & s_{\theta}(\mu_{N-1}, t_{N})\frac{U_{N-1}}{\sigma_{N-1}},\; f(\mu_{N-1}, t_{N})\frac{U_{N-1}}{\sigma_{N-1}}.
\end{align}
These two terms are referred to as control variates in~\cite{NDSM} due to their zero-mean property. In general, learnable coefficients can be multiplied with these control variates and added back to the training loss for variance reduction. In our setting, we simply remove them from the training loss. See Appendix~\ref{section:appendix B} for further details. As a result, Eqs.~\eqref{eq: remaining terms conditional score 1} and~\eqref{eq: remaining terms conditional score 2}, respectively, can be written as
\begin{align}
    \label{eq: remaining terms conditional score removed 1}
    & - \mathbb{E}_{N, z_{N-1}, z_{N}} \left[ g^{2}(t_{N}) \frac{U_{N-1}}{\sigma_{N-1}} \left[ s_{\theta}(z_{N}, t_{N}) - s_{\theta}(\mu_{N - 1}, t_{N}) \right] \right], \\
    \label{eq: remaining terms conditional score removed 2}
    & -\mathbb{E}_{N, z_{N-1}, z_{N}} \left[ \frac{U_{N-1}}{\sigma_{N-1}} \left[ f(z_{N}, t_{N}) - f(\mu_{N - 1}, t_{N}) \right] \right].
\end{align}


\subsection{Main Theorem}\label{subsection:theorem 1}

The above discussion is summarized in the following theorem; see Appendix~\ref{section:appendix B} for a detailed proof.

\begin{theorem}
\label{thm:main}
    Let $\{z_{n}\}_{n=0}^{n_{f}}$ be the Markov process defined by the EM scheme~\eqref{eq:EM Markov process 1}~\eqref{eq:EM Markov process 2}. Let $N$ be a random variable uniformly distributed on $\{1, \cdots, n_{f}\}$ and independent of $z_{0}$ and the $U_{n}$'s. Then the cross-entropy term in VAE training loss can be rewritten as:
    \begin{align}
        \label{eq:main theorem text}
    \text{\normalfont CE}(q_{\phi}(z_{0}|x) \rVert p_{\theta}(z_{0}))
    \approx\; & H(q_{\phi}(z_{n_{f}})) + \frac{1}{2}\mathbb{E}_{N, z_{N}} \left[ g^{2}(t_{N}) \rVert s_{\theta}(z_{N}, t_{N}) \rVert_{2}^{2} \right] \nonumber \\
    & + \mathbb{E}_{N, z_{N-1}, z_{N}} \left[ g^{2}(t_{N}) \frac{U_{N-1}}{\sigma_{N-1}} \left[ s_{\theta}(z_{N}, t_{N}) - s_{\theta}(\mu_{N - 1}, t_{N}) \right] \right] \nonumber \\
    & - \mathbb{E}_{N, z_{N-1}, z_{N}} \left[ \frac{U_{N-1}}{\sigma_{N-1}} \left[ f(z_{N}, t_{N}) - f(\mu_{N - 1}, t_{N}) \right] \right].
    \end{align}
\end{theorem}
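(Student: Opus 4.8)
The plan is to begin from the continuous-time identity of Lemma~\ref{lemma:lemma1}, which writes $\text{CE}(q_{\phi}(z_{0}|x)\,\|\,p_{\theta}(z_{0}))$ as the reference entropy $H(q_{\phi}(z_{T}|x))$ plus a $g^{2}$-weighted score-matching integral plus a cross term linear in $\nabla_{z_{t}}\log q_{\phi}(z_{t}|x)$; the same identity holds for any initial latent law, so after averaging over $x\sim p_{\text{data}}$ we may work equivalently with the aggregated posterior and its diffusion marginals $q_{\phi}(z_{t})$. First I would discretize: replace the trajectory $\{z_{t}\}_{t\in[0,T]}$ by the Euler--Maruyama chain $\{z_{n}\}_{n=0}^{n_{f}}$ of~\eqref{eq:EM Markov process 1}--\eqref{eq:EM Markov process 2}, rewrite $\tfrac12\int_{0}^{T}(\cdot)\,dt$ as a Riemann sum reindexed by the uniform random time $N$, and replace each continuous marginal $q_{\phi}(z_{t})$ by its discrete counterpart $q_{\phi}(z_{N})$. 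This produces the approximate identity~\eqref{eq: main text cross entropy} with the two pieces (I) and (II); everything downstream is algebra on these two pieces.

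Next I would expand the square in (I), $\|\nabla_{z_{N}}\log q_{\phi}(z_{N})-s_{\theta}(z_{N},t_{N})\|_{2}^{2}=\|\nabla_{z_{N}}\log q_{\phi}(z_{N})\|_{2}^{2}-2\,s_{\theta}(z_{N},t_{N})\cdot\nabla_{z_{N}}\log q_{\phi}(z_{N})+\|s_{\theta}(z_{N},t_{N})\|_{2}^{2}$, and expand the inner product in (II), $[2f(z_{N},t_{N})-g^{2}(t_{N})\nabla_{z_{N}}\log q_{\phi}(z_{N})]^{T}\nabla_{z_{N}}\log q_{\phi}(z_{N})=2f(z_{N},t_{N})\cdot\nabla_{z_{N}}\log q_{\phi}(z_{N})-g^{2}(t_{N})\|\nabla_{z_{N}}\log q_{\phi}(z_{N})\|_{2}^{2}$. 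Adding (I)$+$(II), the $\tfrac12 g^{2}\|\nabla_{z_{N}}\log q_{\phi}\|_{2}^{2}$ contributions cancel exactly, leaving the computable term $\tfrac12\mathbb{E}[g^{2}(t_{N})\|s_{\theta}(z_{N},t_{N})\|_{2}^{2}]$ together with precisely $-A+B$, where $A$ and $B$ are the marginal-score expectations~\eqref{eq: remaining terms marginal score 1}--\eqref{eq: remaining terms marginal score 2}. This is the routine bookkeeping step.

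The substantive step is converting $A$ and $B$ to conditional-score form. I would use the denoising-score-matching identity: for any test function $h$, $\mathbb{E}_{z_{N}}[\,h(z_{N})\cdot\nabla_{z_{N}}\log q_{\phi}(z_{N})\,]=\mathbb{E}_{z_{N-1},z_{N}}[\,h(z_{N})\cdot\nabla_{z_{N}}\log q_{\phi}(z_{N}\mid z_{N-1})\,]$, which follows from $q_{\phi}(z_{N})\nabla_{z_{N}}\log q_{\phi}(z_{N})=\nabla_{z_{N}}q_{\phi}(z_{N})=\int\nabla_{z_{N}}q_{\phi}(z_{N}\mid z_{N-1})\,q_{\phi}(z_{N-1})\,dz_{N-1}$; taking $h=g^{2}s_{\theta}$ and $h=f$ gives~\eqref{eq: remaining terms conditional score 1}--\eqref{eq: remaining terms conditional score 2} (the derivation deferred to Appendix~\ref{section:appendix B}). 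Since the EM transition is the Gaussian $z_{N}\mid z_{N-1}\sim\mathcal{N}(\mu_{N-1},\sigma_{N-1}^{2}I)$, its score is $-(z_{N}-\mu_{N-1})/\sigma_{N-1}^{2}=-U_{N-1}/\sigma_{N-1}$, which yields~\eqref{eq: text closed form transition} and hence $A=-\mathbb{E}[g^{2}(t_{N})\tfrac{U_{N-1}}{\sigma_{N-1}}\cdot s_{\theta}(z_{N},t_{N})]$ and $B=-\mathbb{E}[\tfrac{U_{N-1}}{\sigma_{N-1}}\cdot f(z_{N},t_{N})]$. Finally I would add to each the corresponding control variate of~\eqref{eq: control variates}: because $s_{\theta}(\mu_{N-1},t_{N})$ and $f(\mu_{N-1},t_{N})$ are measurable functions of $z_{N-1}$ and $t_{N}$ only, they are independent of $U_{N-1}\sim\mathcal{N}(0,I)$ and of $N$, so each control variate has mean zero; adding these zeros turns $s_{\theta}(z_{N},t_{N})$ into $s_{\theta}(z_{N},t_{N})-s_{\theta}(\mu_{N-1},t_{N})$ and $f(z_{N},t_{N})$ into $f(z_{N},t_{N})-f(\mu_{N-1},t_{N})$, giving~\eqref{eq: remaining terms conditional score removed 1}--\eqref{eq: remaining terms conditional score removed 2}. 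Reassembling $H(q_{\phi}(z_{n_{f}}))+\tfrac12\mathbb{E}[g^{2}\|s_{\theta}\|_{2}^{2}]-A+B$ with these expressions is exactly~\eqref{eq:main theorem text}.

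The step I expect to be the main obstacle is keeping these manipulations consistent with the approximations, rather than the algebra itself. The closed-form transition~\eqref{eq: text closed form transition} and the first-order Taylor rationale for singling out the control variates~\eqref{eq: control variates} are accurate only as $\Delta t_{N-1}\to0$, and the conditional score blows up like $1/\sqrt{\Delta t_{N-1}}$; one must argue, following~\cite{NDSM}, that the surviving differences $s_{\theta}(z_{N},t_{N})-s_{\theta}(\mu_{N-1},t_{N})$ and $f(z_{N},t_{N})-f(\mu_{N-1},t_{N})$ are themselves of order $\sqrt{\Delta t_{N-1}}$, so that after removing the mean-zero $1/\sqrt{\Delta t_{N-1}}$ pieces the objective has bounded variance whereas keeping them would not. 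One also needs the mild regularity (integrability and decay of $q_{\phi}$ and its gradient, finiteness of the relevant second moments) that legitimizes the integration-by-parts and Fubini exchanges underlying the denoising-score-matching identity; this is where I would state hypotheses most carefully.
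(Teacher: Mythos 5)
Your proposal is correct and follows essentially the same route as the paper's Appendix~\ref{section:appendix B} proof: invoke Lemma~\ref{lemma:lemma1}, discretize via the EM chain, expand and cancel the $\tfrac12 g^{2}\rVert\nabla_{z_{N}}\log q_{\phi}(z_{N})\rVert_{2}^{2}$ terms, convert the marginal score to the conditional score through the Markov factorization, substitute the Gaussian transition score $-U_{N-1}/\sigma_{N-1}$, and subtract the two zero-mean control variates identified by first-order Taylor expansion. The only minor imprecision is the claim that the control variates are ``independent of $N$''---they are not, but conditioning on $N$ and $z_{N-1}$ (as the paper does) still yields zero mean, so the conclusion stands.
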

Substituting~\eqref{eq:main theorem text} into the variational autoencoder loss~\eqref{eq:new VAE-induced loss function} yields the final training objective. The encoder $q_{\phi}(z_{0} | x)$, the decoder $p_{\psi}(x | z_{0})$, and the score network $s_{\theta}(z_{t}, t)$ are trained jointly by minimizing this final loss. We refer to the resulting latent SGM with a nonlinear noising process as LNDSM-SGM.


\section{Experiments}\label{section:4}

Similar to \cite{NDSM}, we demonstrate the enhanced performance of our proposed LNDSM-SGM in learning high-dimensional structured distributions using the MNIST dataset \cite{MNIST}. This dataset is a collection of 60,000 handwritten digits from 0 to 9, stored as $28 \times 28$ grayscale images (figure~\ref{fig:samples and frequency a}). It exhibits a multimodal structure, in both the original space and, perhaps even more prominently, in a latent space, as each digit naturally forms at least one mode. For this reason, the reference distribution is chosen as a Gaussian mixture model (GMM) with 10 Gaussian components, where each component corresponds to the mode associated with one digit. We also consider a low-data regime, in which only 3,000 samples from the target distribution are used for training.

We further conduct experiments on the Approx.-C2-MNIST dataset. This dataset is constructed by randomly rotating each digit by $180^\circ$ with probability $0.5$, followed by resizing the rotated digits to half of their original size. As a result, the large digits remain upright, while the small digits appear upside down (Figure~\ref{fig:samples and frequency approx-C2 a}). The dataset is referred to as Approx.-C2 because the samples can be viewed as drawn from a distribution that is approximately invariant under the discrete rotation group $C_{2}$. The reference distribution in this case is a GMM with two Gaussian components: one corresponding to the large, upright digits, and the other to the small, upside-down digits.

The quality and variability of the generated samples are evaluated using the inception score~\cite{IS} (IS, higher is better). The discrepancy between the generated distribution and the true distribution is quantified using the Fr\'{e}chet inception distance~\cite{FID} (FID, lower is better). To compute the FID, a ResNet-18 classifier is trained separately on the MNIST and Approx.-C2-MNIST datasets.


\subsection{Implementation Details}

Both the benchmark LSGM and our proposed LNDSM-SGM use the NVAE~\cite{NVAE} architecture as the backbone of the VAE and the NCSNpp~\cite{song2021scorebased} as the backbone of the score network. By maintaining the identical network architectures, we ensure that any observed performance differences are attributable solely to the proposed LNDSM training objective. For the MNIST dataset, we simplify the NVAE architecture used in~\cite{LSGM} by reducing the number of channels in both the feature maps and the latent variables; see Appendix~\ref{section:appendix C} for details of the simplification and the NVAE building blocks. The dimension of the latent variables determined by the NVAE encoder is $2 \times 8 \times 8 = 128$. The NCSNpp-based score network is also simplified from the version used in \cite{LSGM}: the number of base channels is reduced from 256 to 32, the number of residual blocks in U-net downsampling path is reduced from 16 to 4, and the number of residual blocks in the upsampling path is reduced from 18 to 6. The numbers of trainable parameters for all networks are reported in Table~\ref{tab:numbers of trainable parameters}. We also include the score network used in~\cite{NDSM}, which is based on a U-net~\cite{U-Net}. The simplifications to NVAE and NCSNpp are designed to ensure that their combined number of trainable parameters is comparable to that of the U-net.
\begin{table}[ht]
    \centering
    \caption{Number of trainable parameters}
    \label{tab:numbers of trainable parameters}
    \begin{tabular}{ccc}
    \toprule
    Networks & \# trainable parameters \\
    \midrule
        NVAE                  &    207,648     \\
        NCSNpp                &    895,076     \\
        NVAE + NCSNpp         &    1,102,724     \\
        U-Net in NDSM-SGM     &    1,115,297     \\
    \bottomrule
    \end{tabular}
\end{table}

For the benchmark LSGM, we adopt the Variance-Preserving (VP) SDE~\cite{song2021scorebased} as the forward diffusion process in the latent space:
\begin{align}
    \label{eq:VP SDE in the latent space}
    dz_{t} = -\frac{1}{2}\beta(t)z_{t}dt + \sqrt{\beta(t)}dw_{t},
\end{align}
where $\beta(t)$ is a linear function on $[0, T]$ with $\beta(0) = 0.1$ and $\beta(T) = 20$. The terminal time $T$ is chosen to be $1$. Following \cite{LSGM}, the VAE is pretrained for 200 epochs and used to initialize the VAE in the benchmark LSGM. The VAE and the score network are then trained jointly using the Adam optimizer~\cite{kingma2015adam} with a batch size of 120 on a V-100-32GB GPU. The learning rates are $10^{-4}$ and $3 \times 10^{-4}$ respectively. The number of training epochs is 500 for the full-data regime (both MNIST and Approx.-C2-MNIST) and 1000 for the low-data regime.

For our proposed LNDSM-SGM, the VAE encoder maps the training samples to the latent space, where a GMM reference distribution is fitted using the latent variables. The VAE is initialized to be the VAE trained under the benchmark LSGM for 100 epochs, and its learning rate is set to $10^{-6}$, which is much smaller than $10^{-4}$. Since the entropy of the GMM reference distribution in the training loss has no closed-form expression, this initialization together with the small learning rate ensures that the latent variables remain sufficiently expressive and do not change significantly during training. Therefore, the fitted GMM in the latent space can be regarded as approximately fixed, and its entropy term is removed from the training loss. The terminal time of the latent space forward diffusion process~\ref{eq:general forward SDE} is set to $T=1.5$, and the number of time steps in the EM scheme is set to 100. The VAE and the score network are trained jointly using the Adam optimizer with a batch size of 60 on a V-100-32GB GPU for 50 epochs for both regimes.


\subsection{Results Comparison}

An import technical component of the benchmark LSGM is importance sampling (Appendix B in \cite{LSGM}). The expectation $\mathbb{E}_{t}$ in~\eqref{eq:cross entropy} has high variance if $t$ is uniformly sampled from $[0, T]$, which leads to ineffective training and poor sample quality, as reflected by large FID values. Importance sampling alleviates this issue by drawing $t$ from a suitably designed importance distribution. Although this approach is empirically effective, its derivation relies on the assumption that the latent target distribution is standard Gaussian. In contrast, for the proposed LNDSM-SGM, all trajectories generated by the EM scheme are directly used to compute the training loss~\eqref{eq:main theorem text}. As a result, no additional importance sampling is required for variance reduction.

A limitation of NDSM-SGM~\cite{NDSM} is its slow synthesis speed, which is dominated by the cost of simulating the forward SDE via the EM method. In~\eqref{eq:nonlinear forward SDE text}, the state variables are 784-dimensional vectors, and the EM discretization uses 1000 steps. With a batch size of 60, the average simulation time is 0.35 seconds. Our proposed LNDSM-SGM achieves substantially faster synthesis. Variables in the SDE~\eqref{eq:generic diffusion process in the latent space} are 128-dimensional vectors in the latent space. Because of the joint training setup, gradients of these variables w.r.t the parameters of the VAE are stored. The latent representations allow for smaller number of EM discretization steps, which is 100. This choice is justified by the empirical fact that the probability flow ODE is usually much less complex in the latent space~\cite{LSGM}. The resulting average simulation time is 0.068 seconds with a batch size of 60.

Because of the EM scheme, each training epoch of  LNDSM-SGM is computationally more expensive, requiring on average 600 seconds per epoch, compared with 80 seconds per epoch for the benchmark LSGM. However, the latent variables generated by the EM scheme are directly used to evaluate the training loss~\eqref{eq:main theorem text} and are therefore not wasted. Moreover, LNDSM-SGM only needs 50 training epochs to convergence, whereas the benchmark LSGM requires 500 epochs in the full-data regime and 1000 epochs in the low-data regime. Consequently, the total training time of the proposed LNDSM-SGM is lower.

After training, 10{,}000 samples are generated by simulating the reverse-time SDE or the corresponding probability flow ODE \cite{song2021scorebased}, with the marginal score replaced by the trained score network. These samples are used to compute the FID and IS. A snapshot (64 out of 10{,}000 samples) and the empirical fraction of different digits are visualized in Figures~\ref{fig:samples and frequency}, \ref{fig:samples and frequency 3000}, and~\ref{fig:samples and frequency approx-C2}. The same evaluation pipeline is applied to both the benchmark LSGM and the proposed LNDSM-SGM.

The samples generated by the benchmark LSGM exhibit mode imbalance (Figure~\ref{fig:samples and frequency e},~\ref{fig:samples and frequency 3000 e}, and~\ref{fig:samples and frequency approx-C2 e}). The empirical fractions of generated digits differ significantly from the target fractions in the training data (Figures~\ref{fig:samples and frequency d}, \ref{fig:samples and frequency 3000 d}, and~\ref{fig:samples and frequency approx-C2 d}). This imbalance is largely resolved by the proposed LNDSM-SGM. By introducing a GMM reference distribution to impose the multimodal structure, the fractions of generated digits closely match the target distribution, especially for the MNIST and the Approx.-C2-MNIST (Figure~\ref{fig:samples and frequency f} and~\ref{fig:samples and frequency approx-C2 f}). In the low-data MNIST setting, although some discrepancy remains (Figure~\ref{fig:samples and frequency 3000 f}), LNDSM-SGM captures the relative magnitude of digit frequencies more accurately than the benchmark LSGM.

The best FID values and the corresponding IS achieved during the training process are reported in Tables~\ref{tab:FID IS} and~\ref{tab:FID IS low data}. The exhibited FID and IS for NDSM-SGM on MNIST are taken from~\cite{NDSM}. The benchmark LSGM outperforms the NDSM-SGM in both FID and IS, which can be attributed to the mapping of high-dimensional data into a more compact and informative latent space that facilitates density estimation. The proposed LNDSM-SGM further improves upon LSGM, achieving substantially lower FID and higher IS on both MNIST and low-data MNIST. These results demonstrate the enhanced capability of LNDSM-SGM in learning structured target distributions.

\begin{figure}[htbp]
  \centering
  
  \begin{subfigure}[t]{0.24\linewidth}
    \centering
    \includegraphics[width=\linewidth]{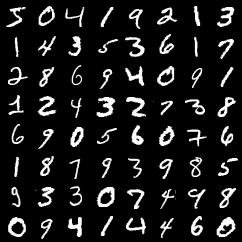}
    \caption{Training samples}
    \label{fig:samples and frequency a}
  \end{subfigure}
  \hspace{0.5em}
  \begin{subfigure}[t]{0.24\linewidth}
    \centering
    \includegraphics[width=\linewidth]{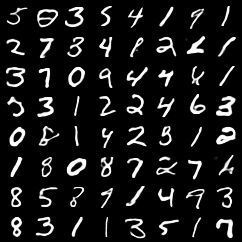}
    \caption{LSGM}
    \label{fig:samples and frequency b}
  \end{subfigure}
  \hspace{0.5em}
  \begin{subfigure}[t]{0.24\linewidth}
    \centering
    \includegraphics[width=\linewidth]{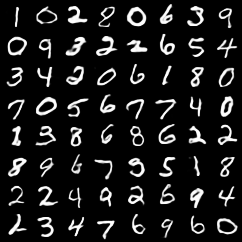}
    \caption{\textbf{LNDSM-SGM}}
    \label{fig:samples and frequency c}
  \end{subfigure}

  \vspace{0.5em}
  
  \begin{subfigure}[t]{0.24\linewidth}
    \centering
    \includegraphics[width=\linewidth]{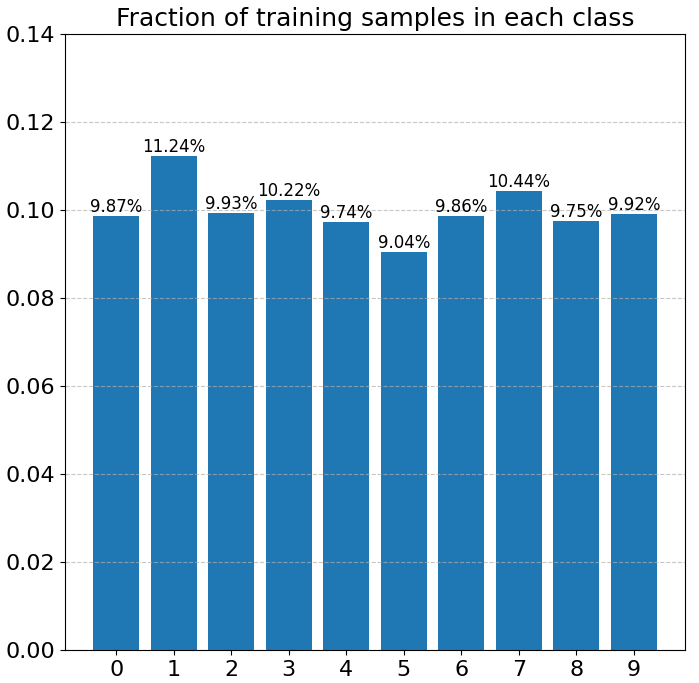}
    \caption{Training samples}
    \label{fig:samples and frequency d}
  \end{subfigure}
  \hspace{0.5em}
  \begin{subfigure}[t]{0.24\linewidth}
    \centering
    \includegraphics[width=\linewidth]{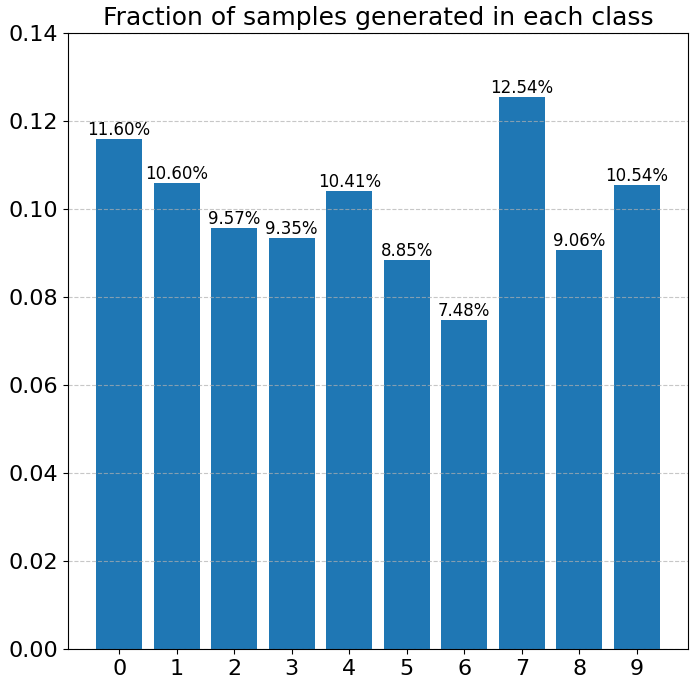}
    \caption{LSGM}
    \label{fig:samples and frequency e}
  \end{subfigure}
  \hspace{0.5em}
  \begin{subfigure}[t]{0.24\linewidth}
    \centering
    \includegraphics[width=\linewidth]{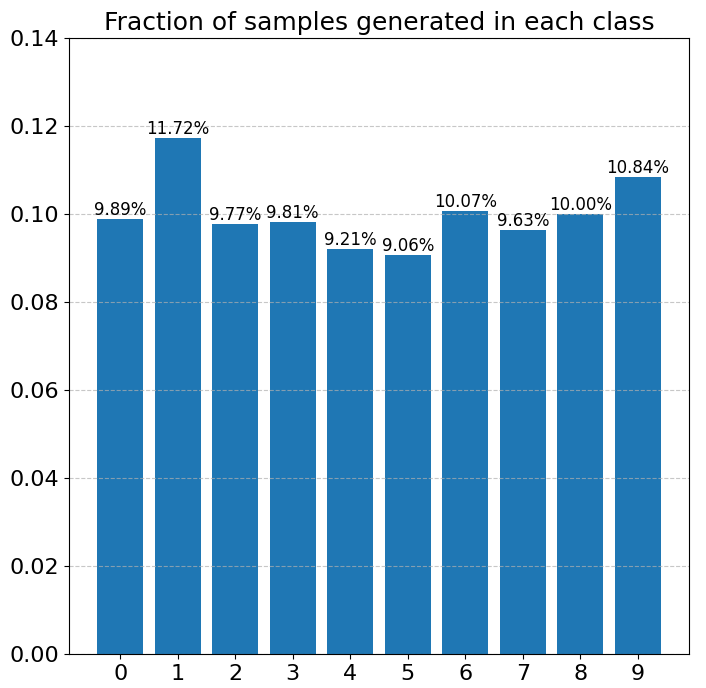}
    \caption{\textbf{LNDSM-SGM}}
    \label{fig:samples and frequency f}
  \end{subfigure}
  
  \caption{LSGM and LNDSM-SGM trained using the full MNIST dataset. (a) and (d): Snapshot and fraction of different digits of the training samples. (b) and (e): Snapshot and fraction of different digits of the 10,000 samples generated by the LSGM. The KL divergence from the fraction of the training samples to the fraction of the generated samples: 0.008. (c) and (f): Snapshot and fraction of different digits of the 10,000 samples generated by the LNDSM-SGM. The KL divergence from the fraction of the training samples to the fraction of the generated samples: 0.00114.}
  \label{fig:samples and frequency}
\end{figure}

\begin{table}[htbp]
    \centering
    \caption{FID and IS on MNIST}
    \label{tab:FID IS}
    \begin{tabular}{ccc}
    \toprule
    model & FID ($\downarrow$) & IS ($\uparrow$) \\
    \midrule
        NDSM-SGM      &   36.1              &      8.93                 \\
        LSGM          &   6.7               &      9.42                 \\
        LNDSM-SGM     &   \textbf{2.2}      &      \textbf{9.75}        \\
    \bottomrule
    \end{tabular}
\end{table}


\section{Conclusion}\label{section:conclusion}

We presented latent nonlinear denoising score matching (LNDSM), which integrates nonlinear diffusions and structured latent priors into the latent score-based generative modeling framework. By reformulating the VAE-induced cross-entropy via an Euler–Maruyama discretization and conditional Gaussian transitions, and by removing two variance-exploding control variates, LNDSM yields a numerically stable training objective that does not require importance sampling in time. Using Gaussian mixture models as latent reference distributions, LNDSM-SGM effectively encodes multimodal and approximately symmetric structure in the latent space. Experiments on MNIST and Approx.-C2-MNIST, including low-data regimes, show that LNDSM-SGM achieves substantially lower Fr\'{e}chet inception distance, higher inception score, improved mode balance, and reduced overall training time compared to LSGM and NDSM-SGM, indicating that nonlinear latent diffusions with structured latent priors provide an efficient and powerful approach for learning structured data distributions.


\section*{Acknowledgments}
This material is based upon work supported by the U.S. National Science Foundation under the award DMS-2502900 and by the Air Force Office of Scientific Research (AFOSR) under Grant No.~FA9550-25-1-0079.

\clearpage


\bibliographystyle{plain}
\bibliography{references}
\clearpage


\appendix


\section{Lemma about Cross Entropy}\label{section:appendix A}

The following lemma is an extension of the theorem in Appendix~A of~\cite{LSGM} to a generic diffusion process. Specifically, the time interval is extended from $[0, 1]$ to $[0, T]$. Accordingly, the proof follows the same structure as in~\cite{LSGM}, prior to the specialization to a linear drift coefficient. For completeness, we provide full derivations and expand the steps that were abbreviated in the original proof.

\begin{lemma}
\label{lemma:lemma1}
    Let $q(z_{0})$ and $p(z_{0})$ be the densities of two distributions defined in the latent space. Consider the generic diffusion process
    \begin{align}
        dz_{t} = f(z_{t}, t)dt + g(t)dw_{t},
    \end{align}
    where the drift $f(z_{t}, t)$ is not necessarily linear in $z_{t}$. Denote the corresponding marginal densities at time $t\in[0,T]$ by $q(z_{t})$ and $p(z_{t})$. Assume that $q(z_{T}) = p(z_{T})$, and that $\log q(z_{t})$ and $\log p(z_{t})$ are smooth functions with at most polynomial growth as $z_{t}\to \pm\infty$. Then the cross entropy $\text{\normalfont CE}(q(z_0) \rVert p(z_0))$ between $q(z_{0})$ and $p(z_{0})$ satisfies
    \begin{align*}
        \text{\normalfont CE}(q(z_0) \rVert p(z_0)) =   H(q(z_{T})) + \int_{0}^{T}\mathbb{E}_{q(z_{t})} & \left[ \frac{1}{2}g^{2}(t)\rVert \nabla_{z_{t}}\log p(z_{t}) \rVert_{2}^{2} + f^{T}(z_{t}, t)\nabla_{z_{t}}\log q(z_{t})\right.\\
        &~~\left.- g^{2}(t)[\nabla_{z_{t}}\log q(z_{t})]^{T}\nabla_{z_{t}}\log p(z_{t})\right] dt.
    \end{align*}
\end{lemma}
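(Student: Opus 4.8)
\emph{Proof proposal.} The plan is to adapt the strategy from Song et al.\ and Appendix~A of~\cite{LSGM}, which realizes the cross entropy as the terminal value of a time-dependent functional and then recovers its initial value by integrating the time derivative. Write $q(\cdot,t)$ and $p(\cdot,t)$ for the marginal densities at time $t$ obtained by evolving $q(z_0)$ and $p(z_0)$ forward under the common diffusion $dz_t = f(z_t,t)\,dt + g(t)\,dw_t$, and set
\[
h(t) \;:=\; \mathbb{E}_{q(z_t)}\!\left[-\log p(z_t)\right] \;=\; -\int q(z,t)\log p(z,t)\,dz .
\]
By construction $h(0) = \text{\normalfont CE}(q(z_0)\|p(z_0))$, while the hypothesis $q(z_T)=p(z_T)$ gives $h(T)=H(q(z_T))$. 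Hence $\text{\normalfont CE}(q(z_0)\|p(z_0)) = H(q(z_T)) - \int_0^T h'(t)\,dt$, and the whole lemma reduces to showing that $-h'(t)$ coincides with the bracketed integrand in the statement.

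For the derivative, I would differentiate under the integral sign and substitute the Fokker–Planck equations satisfied by the two marginals, $\partial_t q = -\nabla\!\cdot\!(fq) + \tfrac12 g^2\Delta q$ and $\partial_t p = -\nabla\!\cdot\!(fp) + \tfrac12 g^2\Delta p$ (with the same $f$ and $g$ for both, since both evolve under the same SDE). This splits $h'(t)$ into a contribution coming from $(\partial_t q)\log p$ and one coming from $q\,(\partial_t p)/p$. Each is reduced to score functions by integration by parts together with the identities $\nabla q = q\,\nabla\log q$ and $\nabla(q/p) = (q/p)(\nabla\log q - \nabla\log p)$; the assumed smoothness and at-most-polynomial growth of $\log q,\log p$ (against the rapid decay of the densities) make all boundary terms vanish and legitimize differentiation under the integral. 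The first contribution collapses to $-\mathbb{E}_{q}[f\cdot\nabla\log p] + \tfrac12 g^2\,\mathbb{E}_{q}[\nabla\log q\cdot\nabla\log p]$, and the second, after two integrations by parts, to $-\mathbb{E}_{q}[f\cdot\nabla\log q] + \mathbb{E}_{q}[f\cdot\nabla\log p] + \tfrac12 g^2\,\mathbb{E}_{q}[\nabla\log q\cdot\nabla\log p] - \tfrac12 g^2\,\mathbb{E}_{q}[\|\nabla\log p\|_2^2]$, where $\mathbb{E}_q$ abbreviates $\mathbb{E}_{q(z_t)}$.

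Adding the two pieces, the $\mathbb{E}_{q}[f\cdot\nabla\log p]$ terms cancel and I am left with
\[
h'(t) \;=\; g^2(t)\,\mathbb{E}_{q(z_t)}\!\left[\nabla\log q\cdot\nabla\log p\right] - \mathbb{E}_{q(z_t)}\!\left[f\cdot\nabla\log q\right] - \tfrac12 g^2(t)\,\mathbb{E}_{q(z_t)}\!\left[\|\nabla\log p\|_2^2\right],
\]
so that $-h'(t)$ is exactly $\mathbb{E}_{q(z_t)}\big[\tfrac12 g^2\|\nabla\log p\|_2^2 + f^T\nabla\log q - g^2(\nabla\log q)^T\nabla\log p\big]$; integrating over $[0,T]$ yields the claim. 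I expect the main obstacle to be the bookkeeping in the $q\,(\partial_t p)/p$ term, where the $1/p$ and $1/p^2$ factors produced by expanding $\nabla(q/p)$ must be tracked through two integrations by parts, together with the technical (but standard) justification — via the stated growth and regularity assumptions — that the boundary terms indeed vanish and that one may differentiate under the integral sign and use the Fokker–Planck equations for the marginals.
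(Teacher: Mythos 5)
Your proposal is correct and follows essentially the same route as the paper's proof: realize the cross entropy as the initial value of $h(t)=\mathbb{E}_{q(z_t)}[-\log p(z_t)]$, use $q(z_T)=p(z_T)$ to identify $h(T)$ with the entropy, differentiate under the integral via the two Fokker--Planck equations, and reduce to score functions by integration by parts and the identities $\nabla q = q\nabla\log q$, $p\,\nabla(q/p)=q\,\nabla\log(q/p)$. Your intermediate expressions for the two contributions and the resulting $h'(t)$ match the paper's computation exactly (the paper merely writes the Fokker--Planck equations in ``score form,'' which lets it get by with a single integration by parts per term).
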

\begin{proof}
The densities $q(z_{t})$ and $p(z_{t})$ satisfy the Fokker–Planck equations
\begin{align}
    \label{eq:Fokker-Planck-1}
    & \frac{\partial q(z_{t})}{\partial t} = \nabla_{z_{t}} \left[ \frac{1}{2}g^{2}(t)q(z_{t})\nabla_{z_{t}}\log q(z_{t}) - f(z_{t}, t)q(z_{t}) \right] := \nabla_{z_{t}}[h_{q}(z_{t}, t)q(z_{t})], \\
    \label{eq:Fokker-Planck-2}
    & \frac{\partial p(z_{t})}{\partial t} = \nabla_{z_{t}} \left[ \frac{1}{2}g^{2}(t)p(z_{t})\nabla_{z_{t}}\log p(z_{t}) - f(z_{t}, t)p(z_{t}) \right] := \nabla_{z_{t}}[h_{p}(z_{t}, t)p(z_{t})].
\end{align}

Rewrite the cross entropy $\text{CE}(q(z_{0}) \rVert p(z_{0}))$ by adding and subtracting an extra term:

\begin{align}
    \text{CE}(q(z_{0}) \rVert p(z_{0})) &= \text{CE}(q(z_{T}) \rVert p(z_{T})) - \text{CE}(q(z_{T}) \rVert p(z_{T})) + \text{CE}(q(z_{0}) \rVert p(z_{0})) \nonumber \\
    &= \text{CE}(q(z_{T}) \rVert p(z_{T})) - \int_{0}^{T}\frac{\partial}{\partial t}\text{CE}(q(z_{t}) \rVert p(z_{t}))dt \nonumber \\
    \label{eq:cross entropy integration}
    &= H(q(z_{T})) - \int_{0}^{T}\frac{\partial}{\partial t}\text{CE}(q(z_{t}) \rVert p(z_{t}))dt.
\end{align}

The third equality uses the assumption $q(z_{T}) = p(z_{T})$, and $H(q(z_{T}))$ is the entropy of $q(z_{T})$.

Next, for the time derivative $\frac{\partial}{\partial t}\text{CE}(q(z_{t}) \rVert p(z_{t}))$ in~\eqref{eq:cross entropy integration}:
\begin{align}
    \frac{\partial}{\partial t}\text{CE}(q(z_{t}) \rVert p(z_{t})) &= \frac{\partial}{\partial t}\int -q(z_{t})\log p(z_{t})dz_{t} \nonumber \\
    &= -\int \left[ \frac{\partial q(z_{t})}{\partial t}\log p(z_{t}) + \frac{q(z_{t})}{p(z_{t})}\frac{\partial p(z_{t})}{\partial t} \right]dz_{t} \nonumber \\
    \label{eq:cross entropy derivative}
    &= -\int \left[ \nabla_{z_{t}} \left[ h_{q}(z_{t}, t)q(z_{t}) \right]\log p(z_{t}) + \frac{q(z_{t})}{p(z_{t})}\nabla_{z_{t}}\left[ h_{p}(z_{t}, t)p(z_{t}) \right] \right]dz_{t}.
\end{align}
The third equality is due to the Fokker-Planck equations~\eqref{eq:Fokker-Planck-1} and~\eqref{eq:Fokker-Planck-2}.

Since $q(z_{t})$ and $p(z_{t})$ are probability densities, they are normalized and therefore satisfy $q(z_{t}),p(z_{t}) \to 0$ as $z_{t}\to \pm\infty$. By assumption, $\log q(z_{t})$ and $\log p(z_{t})$ are smooth functions with at most polynomial growth as $z_{t}\to \pm\infty$. Consequently, their gradients $\nabla_{z_{t}}\log q(z_{t})$ and $\nabla_{z_{t}}\log p(z_{t})$ are also smooth and exhibit at most polynomial growth. Moreover, $q(z_{t})$ and $p(z_{t})$ decay at least exponentially fast. Therefore, the following limits hold:
\begin{align}
    & \log p(z_{t})h_{q}(z_{t}, t)q(z_{t}) = \log p(z_{t})[\frac{1}{2}g^{2}(t)\nabla_{z_{t}}\log q(z_{t}) - f(z_{t}, t)]q(z_{t}) \rightarrow 0, z_{t}\rightarrow \pm\infty, \\
    & \log q(z_{t})h_{p}(z_{t}, t)p(z_{t}) = \log q(z_{t})[\frac{1}{2}g^{2}(t)\nabla_{z_{t}}\log p(z_{t}) - f(z_{t}, t)]p(z_{t}) \rightarrow 0, z_{t}\rightarrow \pm\infty.
\end{align}
Applying integration by parts yields
\begin{align}
    & \int \nabla_{z_{t}}[h_{q}(z_{t}, t)q(z_{t})]\log p(z_{t})dz_{t} = \int \log p(z_{t}) d[h_{q}(z_{t}, t)q(z_{t})] \nonumber \\
    =& \log p(z_{t})h_{q}(z_{t}, t)q(z_{t})|_{z_{t}\rightarrow \pm \infty} - \int [h_{q}(z_{t}, t)]^{T}q(z_{t})\nabla_{z_{t}}\log p(z_{t})dz_{t} \nonumber \\
    \label{eq:integral by partition 1}
    =& - \int [h_{q}(z_{t}, t)]^{T}q(z_{t})\nabla_{z_{t}}\log p(z_{t})dz_{t}.
\end{align}
Similarly,
\begin{align}
    & \int \frac{q(z_{t})}{p(z_{t})}\nabla_{z_{t}}[h_{p}(z_{t}, t)p(z_{t})] dz_{t} = \int \frac{q(z_{t})}{p(z_{t})}d[h_{p}(z_{t}, t)p(z_{t})] \nonumber \\
    =& \frac{q(z_{t})}{p(z_{t})}h_{p}(z_{t}, t)p(z_{t})|_{z_{t}\rightarrow \pm \infty} - \int [h_{p}(z_{t}, t)]^{T}p(z_{t})\nabla_{z_{t}}\frac{q(z_{t})}{p(z_{t})}dz_{t} \nonumber \\
    \label{eq:integral by partition 2}
    =& - \int [h_{p}(z_{t}, t)]^{T}p(z_{t})\nabla_{z_{t}}\frac{q(z_{t})}{p(z_{t})}dz_{t}.
\end{align}

Substituting these into~\eqref{eq:cross entropy derivative} gives
\begin{align}
    \label{eq:cross entropy derivative 2}
    & \frac{\partial}{\partial t}\text{CE}(q(z_{t}) \rVert p(z_{t})) = \int [h_{q}(z_{t}, t)]^{T}q(z_{t})\nabla_{z_{t}}\log p(z_{t}) dz_{t} + \int [h_{p}(z_{t}, t)]^{T}p(z_{t})\nabla_{z_{t}}\frac{q(z_{t})}{p(z_{t})} dz_{t}.
\end{align}
Using the log-derivative identity, we obtain
\begin{align}
    & \int [h_{p}(z_{t}, t)]^{T}p(z_{t})\nabla_{z_{t}}\frac{q(z_{t})}{p(z_{t})} dz_{t} \nonumber \\
    =& \int [h_{p}(z_{t}, t)]^{T}q(z_{t})\frac{p(z_{t})}{q(z_{t})}\nabla_{z_{t}}\frac{q(z_{t})}{p(z_{t})} dz_{t} = \int [h_{p}(z_{t}, t)]^{T}q(z_{t})\nabla_{z_{t}}\log\frac{q(z_{t})}{p(z_{t})}dz_{t} \nonumber \\
    \label{eq:log derivative trick}
    =& \int [h_{p}(z_{t}, t)]^{T}q(z_{t})\nabla_{z_{t}}\log q(z_{t})dz_{t} - \int [h_{p}(z_{t}, t)]^{T}q(z_{t})\nabla_{z_{t}}\log p(z_{t})dz_{t}.
\end{align}
Apply~\eqref{eq:log derivative trick} and expressions of $h_{p}(z_{t}, t)$ and $h_{q}(z_{t}, t)$ to~\eqref{eq:cross entropy derivative 2}:
\begin{align}
    & \frac{\partial}{\partial t}\text{CE}(q(z_{t}) \rVert p(z_{t})) \nonumber \\
    =& \int q(z_{t}) \{ [h_{q}(z_{t}, t)]^{T}\nabla_{z_{t}}\log p(z_{t}) + [h_{p}(z_{t}, t)]^{T}\nabla_{z_{t}}\log q(z_{t}) - [h_{p}(z_{t}, t)]^{T}\nabla_{z_{t}}\log p(z_{t}) \} dz_{t} \nonumber \\
    =& \int q(z_{t})\{\frac{1}{2}g^{2}(t)[\nabla_{z_{t}}\log q(z_{t})]^{T}\nabla_{z_{t}}\log p(z_{t}) - f^{T}(z_{t}, t)\nabla_{z_{t}}\log p(z_{t}) \nonumber \\
    &\;\;\;\;\;\;\;\;\;\;\;+ \frac{1}{2}g^{2}(t)[\nabla_{z_{t}}\log p(z_{t})]^{T}\nabla_{z_{t}}\log q(z_{t}) - f^{T}(z_{t}, t)\nabla_{z_{t}}\log q(z_{t}) \nonumber \\
    &\;\;\;\;\;\;\;\;\;\;\;- \frac{1}{2}g^{2}(t)[\nabla_{z_{t}}\log p(z_{t})]^{T}\nabla_{z_{t}}\log p(z_{t}) + f^{T}(z_{t}, t)\nabla_{z_{t}}\log p(z_{t})\}dz_{t} \nonumber \\
    \label{eq:cross entropy derivative 3}
    =& \int q(z_{t}) \left[ - \frac{1}{2}g^{2}(t)\rVert \nabla_{z_{t}}\log p(z_{t}) \rVert_{2}^{2} - f^{T}(z_{t}, t)\nabla_{z_{t}}\log q(z_{t}) + g^{2}(t)[\nabla_{z_{t}}\log q(z_{t})]^{T}\nabla_{z_{t}}\log p(z_{t}) \right] dz_{t}.
\end{align}
Finally, substituting~\eqref{eq:cross entropy derivative 3} into~\eqref{eq:cross entropy integration} proves the result:
\begin{align}
    &\text{CE}(q(z_{0}) \rVert p(z_{0})) \nonumber \\
    =& H(q(z_{T})) - \int_{0}^{T}\frac{\partial}{\partial t}\text{CE}(q(z_{t}) \rVert p(z_{t}))dt \nonumber \\
    =& H(q(z_{T})) + \int_{0}^{T}\mathbb{E}_{q(z_{t})} \left[ \frac{1}{2}g^{2}(t)\rVert \nabla_{z_{t}}\log p(z_{t}) \rVert_{2}^{2} + f^{T}(z_{t}, t)\nabla_{z_{t}}\log q(z_{t}) - g^{2}(t)[\nabla_{z_{t}}\log q(z_{t})]^{T}\nabla_{z_{t}}\log p(z_{t}) \right] dt.
\end{align}

\end{proof}


\section{Proof of Theorem~\ref{thm:main}}\label{section:appendix B}

\begin{reptheorem}{thm:main}
    Let $\{z_{n}\}_{n=0}^{n_{f}}$ be the Markov process defined by the EM scheme~\eqref{eq:EM Markov process 1}~\eqref{eq:EM Markov process 2}. Let $N$ be a random variable uniformly distributed on $\{1, \cdots, n_{f}\}$ and independent of $z_{0}$ and the $U_{n}$'s. Then the cross-entropy term in VAE training loss can be rewritten as:
    \begin{align}
        \label{eq:main theorem appendix}
    \text{\normalfont CE}(q_{\phi}(z_{0}|x) \rVert p_{\theta}(z_{0}))
    \approx\; & H(q_{\phi}(z_{n_{f}})) + \frac{1}{2}\mathbb{E}_{N, z_{N}} \left[ g^{2}(t_{N}) \rVert s_{\theta}(z_{N}, t_{N}) \rVert_{2}^{2} \right] \nonumber \\
    & + \mathbb{E}_{N, z_{N-1}, z_{N}} \left[ g^{2}(t_{N}) \frac{U_{N-1}}{\sigma_{N-1}} \left[ s_{\theta}(z_{N}, t_{N}) - s_{\theta}(\mu_{N - 1}, t_{N}) \right] \right] \nonumber \\
    & - \mathbb{E}_{N, z_{N-1}, z_{N}} \left[ \frac{U_{N-1}}{\sigma_{N-1}} \left[ f(z_{N}, t_{N}) - f(\mu_{N - 1}, t_{N}) \right] \right].
    \end{align}
\end{reptheorem}

\begin{proof}

View the densities $q_{\phi}(z_{0}\mid x)$ and $p_{\theta}(z_{0})$ as $q(z_{0})$ and $p(z_{0})$ in Lemma~\ref{lemma:lemma1}. By adding and subtracting the term $\frac{1}{2}g^{2}(t)\rVert \nabla_{z_{t}}\log q_{\phi}(z_{t}\mid x) \rVert_{2}^{2}$, the cross-entropy term in the loss~\eqref{eq:new VAE-induced loss function} can be written as

\noindent
\begin{align}
    \label{eq: cross entropy 3.1}
    & \text{CE}(q_{\phi}(z_{0}|x) \rVert p_{\theta}(z_{0})) \nonumber \\
    =\; & H(q_{\phi}(z_{T}|x)) \nonumber \\
    & + \int_{0}^{T}\mathbb{E}_{q_{\phi}(z_{t}|x)} \left[\frac{1}{2}g^{2}(t)\rVert \nabla_{z_{t}}\log p_{\theta}(z_{t}) \rVert_{2}^{2} + f^{T}(z_{t}, t)\nabla_{z_{t}}\log q_{\phi}(z_{t}|x) - g^{2}(t)\left[\nabla_{z_{t}}\log q_{\phi}(z_{t}|x)\right]^{T}\nabla_{z_{t}}\log p_{\theta}(z_{t}) \right] dt \nonumber \\
    =\; & H(q_{\phi}(z_{T}|x)) \nonumber \\
    & + \frac{1}{2}\int_{0}^{T}g^{2}(t)\mathbb{E}_{q_{\phi}(z_{t}|x)}\left[ \rVert \nabla_{z_{t}}\log q_{\phi}(z_{t} | x)  - \nabla_{z_{t}}\log p_{\theta}(z_{t}) \rVert_{2}^{2} \right]dt \nonumber \\
    & + \frac{1}{2}\int_{0}^{T}\mathbb{E}_{q_{\phi}(z_{t}|x)}\left[ \left[2f(z_{t}, t) - g^{2}(t)\nabla_{z_{t}}\log q_{\phi}(z_{t}|x) \right]^{T}\nabla_{z_{t}}\log q_{\phi}(z_{t} | x) \right]dt \nonumber \\
    =\; & H(q_{\phi}(z_{T}|x)) \nonumber \\
    & + \frac{1}{2}\int_{0}^{T}g^{2}(t)\mathbb{E}_{q_{\phi}(z_{t}|x)}\left[ \rVert \nabla_{z_{t}}\log q_{\phi}(z_{t} | x)  - s_{\theta}(z_{t}, t) \rVert_{2}^{2} \right]dt \nonumber \\
    & + \frac{1}{2}\int_{0}^{T}\mathbb{E}_{q_{\phi}(z_{t}|x)}\left[ \left[2f(z_{t}, t) - g^{2}(t)\nabla_{z_{t}}\log q_{\phi}(z_{t}|x) \right]^{T}\nabla_{z_{t}}\log q_{\phi}(z_{t} | x) \right]dt.
\end{align}
The second equality follows from completing the square after adding and subtracting the extra term. Here $\nabla_{z_{t}}\log p_{\theta}(z_{t})$ is the score of the marginal distribution at time $t$, which is parameterized by the score network $s_{\theta}(z_{t}, t)$. Combined with the Markov process~\eqref{eq:EM Markov process 1}~\eqref{eq:EM Markov process 2}, the cross entropy~\eqref{eq: cross entropy 3.1} can be approximated by:

\noindent
\begin{align}
    \label{eq: cross entropy 3.2}
    & \text{CE}(q_{\phi}(z_{0}|x) \rVert p_{\theta}(z_{0})) \nonumber \\
    \approx\; & H(q_{\phi}(z_{n_{f}})) \nonumber \\
    & + \frac{1}{2}\mathbb{E}_{N, z_{N}}\left[ g^{2}(t_{N}) \rVert \nabla_{z_{N}}\log q_{\phi}(z_{N}) - s_{\theta}(z_{N}, t_{N}) \rVert_{2}^{2} \right] \nonumber \\
    & + \frac{1}{2}\mathbb{E}_{N, z_{N}}\left[ \left[ 2f(z_{N}, t_{N}) - g^{2}(t_{N})\nabla_{z_{N}}\log q_{\phi}(z_{N}) \right]^{T} \nabla_{z_{N}}\log q_{\phi}(z_{N}) \right].
\end{align}
Here, the continuous time variable $t$ is replaced by the discrete time steps $t_{N}$, and the latent variable $z_{t}$ is replaced by $z_{N}$ in the discrete Markov process. Split the second squared norm term in~\eqref{eq: cross entropy 3.2}, also consider $N$ takes some value $n$ for simplicity:
\begin{align}
    \label{eq: splitted squared norm term}
    & \frac{1}{2}\mathbb{E}_{z_{n}}\left[ g^{2}(t_{n}) \rVert \nabla_{z_{n}}\log q_{\phi}(z_{n}) - s_{\theta}(z_{n}, t_{n}) \rVert_{2}^{2} \right] \nonumber \\
    =\; & \frac{1}{2}\mathbb{E}_{z_{n}}\left[ g^{2}(t_{n}) \rVert s_{\theta}(z_{n}, t_{n}) \rVert_{2}^{2} \right] - \mathbb{E}_{z_{n}}\left[ g^{2}(t_{n})s_{\theta}(z_{n}, t_{n})\cdot \nabla_{z_{n}}\log q_{\phi}(z_{n}) \right] + \frac{1}{2}\mathbb{E}_{z_{n}}\left[ g^{2}(t_{n}) \rVert \nabla_{z_{n}}\log q_{\phi}(z_{n}) \rVert_{2}^{2} \right].
\end{align}
The second inner product term in~\eqref{eq: splitted squared norm term} can be written in integral as
\begin{align}
    \label{eq: splitted squared norm term second term integral}
    & \mathbb{E}_{z_{n}}\left[ g^{2}(t_{n})s_{\theta}(z_{n}, t_{n})\cdot \nabla_{z_{n}}\log q_{\phi}(z_{n}) \right] \nonumber \\
    =\; & g^{2}(t_{n}) \int s_{\theta}(z_{n}, t_{n}) \cdot q_{\phi}(z_{n}) \nabla_{z_{n}} \log q_{\phi}(z_{n}) dz_{n} \nonumber \\
    =\; & g^{2}(t_{n}) \int s_{\theta}(z_{n}, t_{n}) \cdot \nabla_{z_{n}} q_{\phi}(z_{n}) dz_{n} \nonumber \\
    =\; & g^{2}(t_{n}) \int s_{\theta}(z_{n}, t_{n}) \cdot \nabla_{z_{n}} \left[ \int \cdots \int \int q_{\phi}(z_{n} | z_{n - 1}) \cdots q_{\phi}(z_{1} | z_{0}) q_{\phi}(z_{0}) dz_{0} dz_{1} \cdots dz_{n - 1} \right] dz_{n} \nonumber \\
    =\; & g^{2}(t_{n}) \int \cdots \int \int \int \left[ s_{\theta}(z_{n}, t_{n}) \cdot \nabla_{z_{n}} \log q_{\phi}(z_{n} | z_{n - 1}) \right] q_{\phi}(z_{n} | z_{n - 1}) \cdots q_{\phi}(z_{1} | z_{0}) q_{\phi}(z_{0}) dz_{0} dz_{1} \cdots dz_{n - 1} dz_{n}
\end{align}

The second and fourth equality use the identity $x\;d\log x = x\frac{d\log x}{dx}dx = dx$. Since $z_{0},\ldots,z_{n}$ form a Markov chain, they satisfy
\begin{align}
    \label{eq: Markov property}
    & q_{\phi}(z_{n} | z_{n - 1}) \cdots q_{\phi}(z_{1} | z_{0}) q_{\phi}(z_{0}) = q_{\phi}(z_{n}, z_{n - 1}, \cdots, z_{1}, z_{0}).
\end{align}
Substituting~\eqref{eq: Markov property} into~\eqref{eq: splitted squared norm term second term integral} yields
\begin{align}
    & \mathbb{E}_{z_{n}}\left[ g^{2}(t_{n})s_{\theta}(z_{n}, t_{n})\cdot \nabla_{z_{n}}\log q_{\phi}(z_{n}) \right] \nonumber \\
    =\; & g^{2}(t_{n}) \int \int s_{\theta}(z_{n}, t_{n}) \cdot \nabla_{z_{n}} \log q_{\phi}(z_{n} | z_{n - 1}) \left[ \int \int q_{\phi}(z_{n}, z_{n - 1}, \cdots, z_{1}, z_{0}) dz_{0} dz_{1} \cdots dz_{n-2} \right] dz_{n - 1} dz_{n}.
\end{align}
The inner product $s_{\theta}(z_{n}, n) \cdot \nabla_{z_{n}} \log q_{\phi}(z_{n} | z_{n - 1})$ depends only on $z_{n}, z_{n - 1}$, and the inner integral gives the joint density of $(z_{n},z_{n-1})$. Thus the integral in~\eqref{eq: splitted squared norm term second term integral} can be rewritten as
\begin{align}
    \label{eq: splitted squared norm term second term integral temp}
    & \mathbb{E}_{z_{n}}\left[ g^{2}(t_{n})s_{\theta}(z_{n}, t_{n}) \cdot \nabla_{z_{n}}\log q_{\phi}(z_{n}) \right] = g^{2}(t_{n}) \mathbb{E}_{z_{n - 1}, z_{n}} \left[ s_{\theta}(z_{n}, t_{n}) \cdot \nabla_{z_{n}}\log q_{\phi}(z_{n} | z_{n - 1}) \right].
\end{align}

Since $z_{n} = \mu_{n-1} + \sigma_{n-1}U_{n - 1}$, which is approximately Gaussian when $\Delta t_{n-1}$ is small enough, the score of the transitional distribution $q_{\phi}(z_{n} | z_{n - 1})$ has closed form:

\noindent
\begin{align}
    \label{eq: Markov process transitional distribution score}
    & \nabla_{z_{n}}\log q_{\phi}(z_{n} | z_{n - 1}) = - \frac{U_{n - 1}}{\sigma_{n-1}}.
\end{align}

Then the rewritten integral~\eqref{eq: splitted squared norm term second term integral temp} becomes:

\noindent
\begin{align}
    \label{eq: splitted squared norm term second term integral final}
    & \mathbb{E}_{z_{n}}\left[ g^{2}(t_{n})s_{\theta}(z_{n}, t_{n}) \cdot \nabla_{z_{n}}\log q_{\phi}(z_{n}) \right] = - g^{2}(t_{n}) \mathbb{E}_{z_{n - 1}, z_{n}} \left[ s_{\theta}(z_{n}, t_{n}) \cdot \frac{U_{n-1}}{\sigma_{n-1}} \right].
\end{align}

Note that $\sigma_{n-1} = g(t_{n - 1})\sqrt{\Delta t_{n - 1}}$~\eqref{eq:EM Markov process 2}, where $g$ is the diffusion coefficient of the diffusion process determined by the reference distribution.  When $\Delta t_{n - 1}$ is very small, $\frac{1}{\sigma_{n-1}}$ becomes large, leading to high variance.On the other hand, in order for $q_{\phi}(z_{n} \mid z_{n - 1})$ to be well approximated by a Gaussian, the EM discretization must be fine, so $\Delta t_{n-1}$ must indeed be small. To address this issue, consider the Taylor expansion of the score network:
\begin{align}
    \label{eq: score network Taylor expansion 1}
    & s_{\theta}(z_{n}, t_{n}) = s_{\theta}(\mu_{n-1} + \sigma_{n-1}U_{n-1}, t_{n}) \nonumber \\
    =\; & s_{\theta}(\mu_{n-1}, t_{n}) + \nabla_{\mu_{n-1}}s_{\theta}(\mu_{n-1}, t_{n})\cdot\sigma_{n-1}U_{n-1} + O(\sigma_{n-1}^{2}).
\end{align}
Multipling both sides of~\eqref{eq: score network Taylor expansion 1} by $\frac{U_{n-1}}{\sigma_{n-1}}$ gives
\begin{align}
    \label{eq: score network Taylor expansion 2}
    & s_{\theta}(z_{n}, t_{n}) \cdot \frac{U_{n-1}}{\sigma_{n-1}} = s_{\theta}(\mu_{n-1}, t_{n}) \cdot \frac{U_{n-1}}{\sigma_{n-1}} + \left[ \nabla_{\mu_{n-1}}s_{\theta}(\mu_{n-1}, t_{n}) \cdot U_{n-1} \right] U_{n-1} + O(\sigma_{n-1}).
\end{align}
The first term on the right-hand side, $s_{\theta}(\mu_{n-1}, t_{n}) \cdot \frac{U_{n}}{\sigma_{n-1}}$, becomes large when $\sigma_{n-1}$ is small, but has zero expectation since $U_{n-1}\sim\mathcal{N}(0,I)$. Hence it can be subtracted as a control variate:
\begin{align}
    \label{eq: remove first control variate}
    & \mathbb{E}_{z_{n-1}, z_{n}} \left[ s_{\theta}(\mu_{n-1}, t_{n}) \cdot \frac{U_{n-1}}{\sigma_{n-1}} \right] = \mathbb{E}_{z_{n-1}} \left[ \frac{s_{\theta}(\mu_{n-1}, t_{n})}{\sigma_{n-1}} \mathbb{E}_{z_{n}} \left[ U_{n-1} \right] \right] = 0 \nonumber \\
    \Rightarrow\; & \mathbb{E}_{z_{n - 1}, z_{n}} \left[ s_{\theta}(z_{n}, t_{n}) \cdot \frac{U_{n-1}}{\sigma_{n-1}} \right] \nonumber \\
    & = \mathbb{E}_{z_{n - 1}, z_{n}} \left[ s_{\theta}(z_{n}, t_{n}) \cdot \frac{U_{n-1}}{\sigma_{n-1}} - s_{\theta}(\mu_{n-1}, t_{n}) \cdot \frac{U_{n-1}}{\sigma_{n-1}} \right] \nonumber \\
    & = \mathbb{E}_{z_{n - 1}, z_{n}} \left[ \frac{U_{n-1}}{\sigma_{n-1}} \cdot \left[ s_{\theta}(z_{n}, t_{n}) - s_{\theta}(\mu_{n-1}, t_{n}) \right] \right].
\end{align}
Substituting~\eqref{eq: splitted squared norm term second term integral final} and~\eqref{eq: remove first control variate} into~\eqref{eq: splitted squared norm term} yields
\begin{align}
    \label{eq: splitted squared norm term after first removal}
    & \frac{1}{2}\mathbb{E}_{z_{n}}\left[ g^{2}(t_{n}) \rVert \nabla_{z_{n}}\log q_{\phi}(z_{n}) - s_{\theta}(z_{n}, t_{n}) \rVert_{2}^{2} \right] \nonumber \\
    =\; & \frac{1}{2}\mathbb{E}_{z_{n}}\left[ g^{2}(t_{n}) \rVert s_{\theta}(z_{n}, t_{n}) \rVert_{2}^{2} \right] \nonumber \\
    & + g^{2}(t_{n})\mathbb{E}_{z_{n - 1}, z_{n}} \left[ \frac{U_{n-1}}{\sigma_{n-1}} \cdot \left[ s_{\theta}(z_{n}, t_{n}) - s_{\theta}(\mu_{n-1}, t_{n}) \right] \right] \nonumber \\
    &+ \frac{1}{2}\mathbb{E}_{z_{n}}\left[ g^{2}(t_{n}) \rVert \nabla_{z_{n}}\log q_{\phi}(z_{n}) \rVert_{2}^{2} \right].
\end{align}

For simplicity, the above discussion is based on $N$ takes some value $n$. Reintroducing the random index $N$ in place of $n$ gives
\begin{align}
    \label{eq: splitted squared norm term after first removal N back}
    & \frac{1}{2}\mathbb{E}_{N, z_{N}}\left[ g^{2}(t_{N}) \rVert \nabla_{z_{N}}\log q_{\phi}(z_{N}) - s_{\theta}(z_{N}, t_{N}) \rVert_{2}^{2} \right] \nonumber \\
    =\; & \frac{1}{2}\mathbb{E}_{N, z_{N}}\left[ g^{2}(t_{N}) \rVert s_{\theta}(z_{N}, t_{N}) \rVert_{2}^{2} \right] \nonumber \\
    & + \mathbb{E} \left[ g^{2}(t_{N})\mathbb{E}_{z_{N - 1}, z_{N}} \left[ \frac{U_{N-1}}{\sigma_{N-1}} \cdot \left[ s_{\theta}(z_{N}, t_{N}) - s_{\theta}(\mu_{N-1}, t_{N}) \right] \right] \right] \nonumber \\
    &+ \frac{1}{2}\mathbb{E}_{N, z_{N}}\left[ g^{2}(t_{N}) \rVert \nabla_{z_{N}}\log q_{\phi}(z_{N}) \rVert_{2}^{2} \right].
\end{align}

The third squared norm term in~\eqref{eq: splitted squared norm term after first removal N back} can be cancelled out by part of the third inner product term in~\eqref{eq: cross entropy 3.2}. So now the cross entropy term in the loss function can be written as:

\noindent
\begin{align}
    \label{eq: cross entropy 3.3}
    & \text{CE}(q_{\phi}(z_{0}|x) \rVert p_{\theta}(z_{0})) \nonumber \\
    =\; & H(q_{\phi}(z_{n_{f}})) \nonumber \\
    & + \frac{1}{2}\mathbb{E}_{N, z_{N}}\left[ g^{2}(t_{N}) \rVert s_{\theta}(z_{N}, t_{N}) \rVert_{2}^{2} \right] + \mathbb{E}_{N} \left[ g^{2}(t_{N})\mathbb{E}_{z_{N - 1}, z_{N}} \left[ \frac{U_{N-1}}{\sigma_{N-1}} \cdot \left[ s_{\theta}(z_{N}, t_{N}) - s_{\theta}(\mu_{N-1}, t_{N}) \right] \right] \right] \nonumber \\
    & + \mathbb{E}_{N, z_{N}}\left[ f(z_{N}, t_{N}) \cdot \nabla_{z_{N}}\log q_{\phi}(z_{N}) \right].
\end{align}

For the third inner product term in~\eqref{eq: cross entropy 3.3}, we can also write it in the form of integral as~\eqref{eq: splitted squared norm term second term integral} to utilize the approximately Gaussian transition distribution. Also consider $N$ takes some value $n$ for simplicity:

\noindent
\begin{align}
    \label{eq: second integral formulation}
    & \mathbb{E}_{z_{n}} \left[ f(z_{n}, t_{n}) \cdot \nabla_{z_{n}}\log q_{\phi}(z_{n}) \right] \nonumber \\
    =\; & \int f(z_{n}, t_{n}) \cdot q_{\phi}(z_{n}) \nabla_{z_{n}}\log q_{\phi}(z_{n}) dz_{n} \nonumber \\
    =\; & \int f(z_{n}, t_{n}) \cdot \nabla_{z_{n}}q_{\phi}(z_{n}) dz_{n} \nonumber \\
    =\; & \int f(z_{n}, t_{n}) \cdot \nabla_{z_{n}} \left[ \int \cdots \int \int q_{\phi}(z_{n} | z_{n - 1}) \cdots q_{\phi}(z_{1} | z_{0}) q_{\phi}(z_{0}) dz_{0} dz_{1} \cdots dz_{n - 1} \right] dz_{n} \nonumber \\
    =\; & \int \cdots \int \int \int \left[ f(z_{n}, t_{n}) \cdot \nabla_{z_{n}} \log q_{\phi}(z_{n} | z_{n - 1}) \right] q_{\phi}(z_{n} | z_{n - 1}) \cdots q_{\phi}(z_{1} | z_{0}) q_{\phi}(z_{0}) dz_{0} dz_{1} \cdots dz_{n - 1} dz_{n}.
\end{align}

Similar to~\eqref{eq: splitted squared norm term second term integral}, based on the Markov property~\eqref{eq: Markov property}, we interpret~\eqref{eq: second integral formulation} as
\begin{align}
    \label{eq: second control variate 1}
    & \mathbb{E}_{z_{n}}\left[ f(z_{n}, t_{n}) \cdot \nabla_{z_{n}}\log q_{\phi}(z_{n}) \right] =\mathbb{E}_{z_{n-1}, z_{n}}\left[ f(z_{n}, t_{n}) \cdot \nabla_{z_{n}}\log q_{\phi}(z_{n} | z_{n-1}) \right].
\end{align}
Using again that $q_{\phi}(z_{n} \mid z_{n-1})$ is approximately Gaussian, we obtain
\begin{align}
    \label{eq: second control variate 2}
    & \mathbb{E}_{z_{n}}\left[ f(z_{n}, t_{n}) \cdot \nabla_{z_{n}}\log q_{\phi}(z_{n}) \right] = -\mathbb{E}_{z_{n-1}, z_{n}}\left[ f(z_{n}, t_{n}) \cdot \frac{U_{n-1}}{\sigma_{n-1}} \right].
\end{align}
This term also suffers from high variance when $\Delta t_{n-1}$ is small. As before, we consider the Taylor expansion
\begin{align}
    \label{eq: second control variate 3, Taylor expansion}
    & f(z_{n}, t_{n}) = f(\mu_{n-1}, t_{n}) + \nabla_{\mu_{n-1}} f(\mu_{n-1}, t_{n}) \cdot \sigma_{n-1} U_{n-1} + O(\sigma_{n-1}^{2}).
\end{align}
Multiplying~\eqref{eq: second control variate 3, Taylor expansion} by $\frac{U_{n-1}}{\sigma_{n-1}}$ shows that $f(\mu_{n-1}, t_{n}) \cdot \frac{U_{n-1}}{\sigma_{n-1}}$ is the dominant exploding term. Its expectation is zero since $U_{n-1}$ is centered Gaussian, so it can also be removed as a control variate:
\noindent
\begin{align}
    \label{eq: second control variate 3}
    & \mathbb{E}_{z_{n-1}, z_{n}} \left[ f(\mu_{n-1}, t_{n}) \cdot \frac{U_{n-1}}{\sigma_{n-1}} \right] = \mathbb{E}_{z_{n-1}} \left[ \frac{f(\mu_{n-1}, t_{n})}{\sigma_{n-1}} \mathbb{E}_{z_{n}} \left[  U_{n-1} \right] \right] = 0 \nonumber \\
    \Rightarrow\; & \mathbb{E}_{z_{n-1}, z_{n}}\left[ f(z_{n}, t_{n}) \cdot \frac{U_{n-1}}{\sigma_{n-1}} \right] = \mathbb{E}_{z_{n-1}, z_{n}}\left[ f(z_{n}, t_{n}) \cdot \frac{U_{n-1}}{\sigma_{n-1}} - f(\mu_{n-1}, t_{n}) \cdot \frac{U_{n-1}}{\sigma_{n-1}} \right] \nonumber \\
    & = \mathbb{E}_{z_{n-1}, z_{n}} \left[ \frac{U_{n-1}}{\sigma_{n-1}} \left[ f(z_{n}, t_{n}) - f(\mu_{n-1}, t_{n}) \right] \right].
\end{align}
Reinstating $N$ in place of $n$ as a random variable and substituting into~\eqref{eq: cross entropy 3.3} gives
\begin{align}
    \label{eq: cross entropy 3.4}
    & \text{CE}(q_{\phi}(z_{0}|x) \rVert p_{\theta}(z_{0})) \nonumber \\
    =\; & H(q_{\phi}(z_{n_{f}})) + \frac{1}{2}\mathbb{E}_{N, z_{N}}\left[ g^{2}(t_{N}) \rVert s_{\theta}(z_{N}, t_{N}) \rVert_{2}^{2} \right] \nonumber \\
    & + \mathbb{E}_{N} \left[ g^{2}(t_{N})\mathbb{E}_{z_{N - 1}, z_{N}} \left[ \frac{U_{N-1}}{\sigma_{N-1}} \cdot \left[ s_{\theta}(z_{N}, t_{N}) - s_{\theta}(\mu_{N-1}, t_{N}) \right] \right] \right] \nonumber \\
    & - \mathbb{E}_{N} \left[ \mathbb{E}_{z_{N-1}, z_{N}} \left[ \frac{U_{N-1}}{\sigma_{N-1}} \left[ f(z_{N}, t_{N}) - f(\mu_{N-1}, t_{N}) \right] \right] \right],
\end{align}
where the random variable $N$ is uniformly distributed on $\{1, \cdots, n_{f}\}$. This concludes the proof.
\end{proof}


\section{NVAE Architecture}\label{section:appendix C}

The NVAE architecture in our experiments is the simplified version of the one implemented in \cite{LSGM}. We still set the number of latent scales to 1 and the number of groups of latent variables in this only latent scale to 2. We change the number of channels of the latent variables in each group from 20 to 1. We change the number of channels of the output feature representation of the initial convolution layer from 64 to 16. Such changes lead to a much simpler network, with the number of trainable parameters drastically reduced from 3,005,822 to 207,648.

\subsection{Input}

A batch of samples from the MNIST dataset is first zero-padded to obtain a $B \times 1 \times 32 \times 32$ feature, where $B$ is the batch size. This feature is then passed to a $3 \times 3$ convolutional layer with padding of 1 and bias, which outputs a $B \times 16 \times 32 \times 32$ feature.

\subsection{Preprocess Block}

The preprocess block contains 6 cells. Each cell is based on the residual cell introduced in \cite{NVAE}. The input is passed to a batch normalization (BN) layer, activated by the Swish activation function \cite{swish} and passed to a $3 \times 3$ convolutional layer. The output goes through the same process again and is passed to a squeeze and excitation (SE) layer \cite{SE}. The first convolutional layer in the third cell has stride of 2 and outputs a $B \times 32 \times 16 \times 16$ feature. The first convolutional layer in the final cell has stride of 2 and outputs a $B \times 32 \times 8 \times 8$ feature. All six cells extend the residual connection to skip connection with coefficient 0.1.

\subsection{Encoder Tower}

The encoder tower contains 2 cells. Each cell has the same architecture as the cells in the preprocess block with stride of 1. The output of the first cell is a $B \times 32 \times 8 \times 8$ feature and is passed to both the second cell and the encoder combiner in the decoder tower. The output of the second cell is also a $B \times 32 \times 8 \times 8$ feature and is passed to the normal sampler.

\subsection{Normal Sampler}

The purpose of the normal sampler is to sample from the normal distribution. The output of the second cell in the encoder tower is activated by the ELU activation function \cite{elu} and passed to a convolutional layer. The output is a $B \times 2 \times 8 \times 8$ feature, which is splitted into two $B \times 1 \times 8 \times 8$ features and used as the mean and covariance of a normal distribution. A sample is generated from this distribution and passed to a convolutional layer to obtain a $B \times 1 \times 8 \times 8$ feature. This feature is then passed to the first decoder combiner in the decoder tower.

The whole process above is repeated for the output of the encoder combiner in the decoder tower. The output feature is then passed to the second decoder combiner in the decoder tower. There is also a $B \times 1 \times 8 \times 8$ sample from a normal distribution. This sample and the sample in the process above are concatenated as a $B \times 2 \times 8 \times 8$ feature, which is viewed as a batch of latent variables in the latent space determined by the encoder of this NVAE.

\subsection{Decoder Tower}

The decoder tower contains 2 cells. Each cell is modified from the residual cell. The BN $\rightarrow$ Swish activation $\rightarrow$ convolution is replaced by the convolution $\rightarrow$ BN $\rightarrow$ Swish activation. An extra BN is added at the beginning. An extra convolution $\rightarrow$ BN is added before the final SE layer. The decoder tower also contains one encoder combiner and two decoder combiners. The first decoder combiner takes the output of the normal sampler and a $32 \times 8 \times 8$ learnable parameter as inputs. It outputs a $B \times 32 \times 8 \times 8$ feature, which is passed to the first cell in the decoder tower. The encoder combiner takes the outputs of the first cells in both the encoder tower and the decoder tower as inputs. The output is passed to the normal sampler. The normal sampler's output and the first cell's output are then passed to the second decoder combiner. Finally, the decoder combiner's output is passed to the second cell. The output $B \times 32 \times 8 \times 8$ feature is passed to the postprocess block.

\subsection{Postprocess Block}

The postprocess block contains 6 cells. Each cell has the same architecture as the cells in the decoder tower. The first convolutional layer in the first cell has stride of -1 and outputs a $B \times 32 \times 16 \times 16$ feature. The first convolutional layer in the third cell has stride of -1 and outputs a $B \times 16 \times 32 \times 32$ feature. This output is obtained by skip connection, the same as the preprocess block.

\subsection{Output}

The output of the postprocess block is activated by the ELU function and passed to a $3 \times 3$ convolution layer. The output is a $B \times 1 \times 32 \times 32$ feature, which can be viewed as logits. The Bernoulli distribution determined by the logits can be viewed as the output of the decoder of this NVAE.


\section{Experiment Results}\label{section:appendix D}

\begin{figure}[htbp]
  \centering
  
  \begin{subfigure}[t]{0.24\linewidth}
    \centering
    \includegraphics[width=\linewidth]{figures/GMM_weights/MNIST-samples.png}
    \caption{Training samples}
    \label{fig:samples and frequency 3000 a}
  \end{subfigure}
  \hspace{0.5em}
  \begin{subfigure}[t]{0.24\linewidth}
    \centering
    \includegraphics[width=\linewidth]{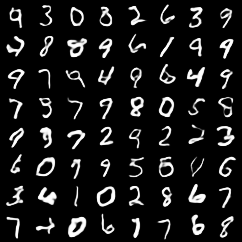}
    \caption{LSGM}
    \label{fig:samples and frequency 3000 b}
  \end{subfigure}
  \hspace{0.5em}
  \begin{subfigure}[t]{0.24\linewidth}
    \centering
    \includegraphics[width=\linewidth]{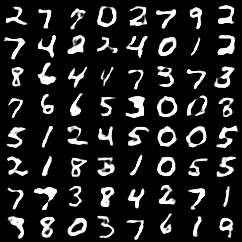}
    \caption{\textbf{LNDSM-SGM}}
    \label{fig:samples and frequency 3000 c}
  \end{subfigure}

  \vspace{0.5em}
  
  \begin{subfigure}[t]{0.24\linewidth}
    \centering
    \includegraphics[width=\linewidth]{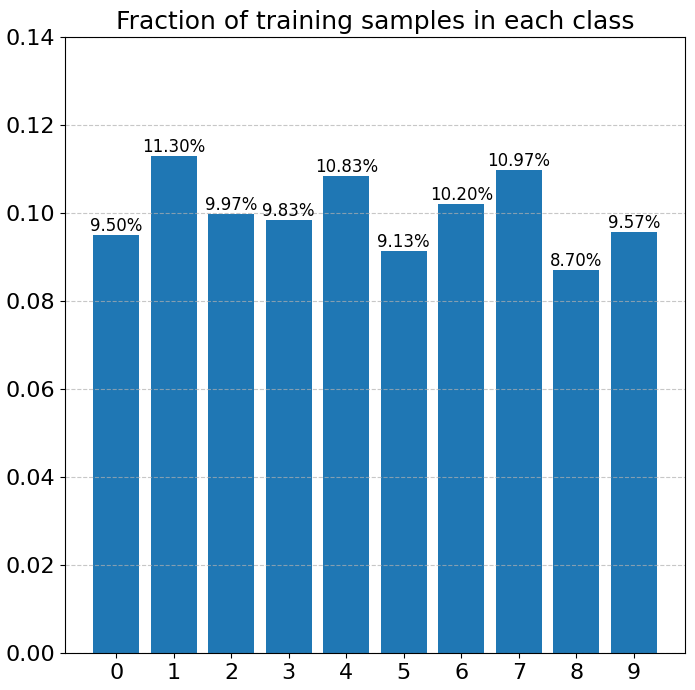}
    \caption{Training samples}
    \label{fig:samples and frequency 3000 d}
  \end{subfigure}
  \hspace{0.5em}
  \begin{subfigure}[t]{0.24\linewidth}
    \centering
    \includegraphics[width=\linewidth]{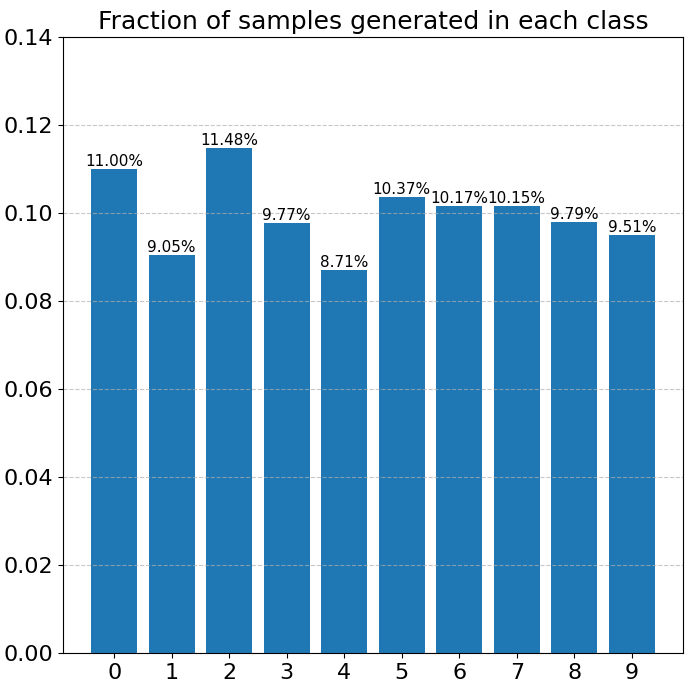}
    \caption{LSGM}
    \label{fig:samples and frequency 3000 e}
  \end{subfigure}
  \hspace{0.5em}
  \begin{subfigure}[t]{0.24\linewidth}
    \centering
    \includegraphics[width=\linewidth]{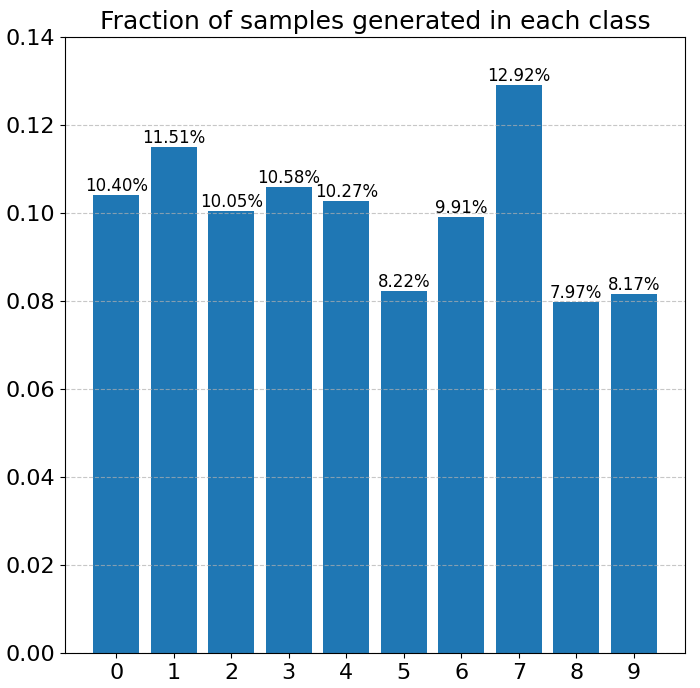}
    \caption{\textbf{LNDSM-SGM}}
    \label{fig:samples and frequency 3000 f}
  \end{subfigure}
  
  \caption{LSGM and LNDSM-SGM trained using the low data MNIST dataset. (a) and (d): Snapshot and fraction of different digits of the training samples. (b) and (e): Snapshot and fraction of different digits of the 10,000 samples generated by the LSGM. The KL divergence from the fraction of the training samples to the fraction of the generated samples: 0.00883. (c) and (f): Snapshot and fraction of different digits of the 10,000 samples generated by the LNDSM-SGM. The KL divergence from the fraction of the training samples to the fraction of the generated samples: 0.00439.}
  \label{fig:samples and frequency 3000}
\end{figure}

\begin{table}[htbp]
    \centering
    \caption{Low data MNIST ($N=3000$)}
    \label{tab:FID IS low data}
    \begin{tabular}{ccc}
    \toprule
    model & FID ($\downarrow$) & IS ($\uparrow$) \\
    \midrule
        LSGM          &   27.1               &      9.12                 \\
        LNDSM-SGM     &   \textbf{15.0}      &      \textbf{9.57}        \\
    \bottomrule
    \end{tabular}
\end{table}

\begin{figure}[htbp]
  \centering
  
  \begin{subfigure}[t]{0.24\linewidth}
    \centering
    \includegraphics[width=\linewidth]{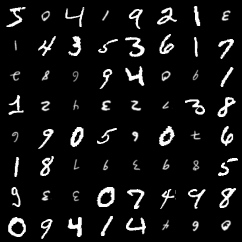}
    \caption{Training samples}
    \label{fig:samples and frequency approx-C2 a}
  \end{subfigure}
  \hspace{0.5em}
  \begin{subfigure}[t]{0.24\linewidth}
    \centering
    \includegraphics[width=\linewidth]{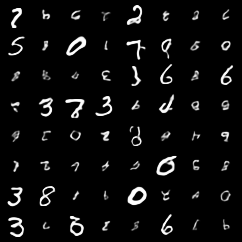}
    \caption{LSGM}
    \label{fig:samples and frequency approx-C2 b}
  \end{subfigure}
  \hspace{0.5em}
  \begin{subfigure}[t]{0.24\linewidth}
    \centering
    \includegraphics[width=\linewidth]{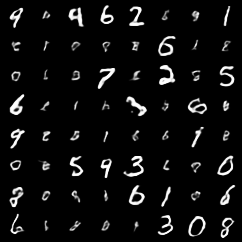}
    \caption{\textbf{LNDSM-SGM}}
    \label{fig:samples and frequency approx-C2 c}
  \end{subfigure}

  \vspace{0.5em}
  
  \begin{subfigure}[t]{0.24\linewidth}
    \centering
    \includegraphics[width=\linewidth]{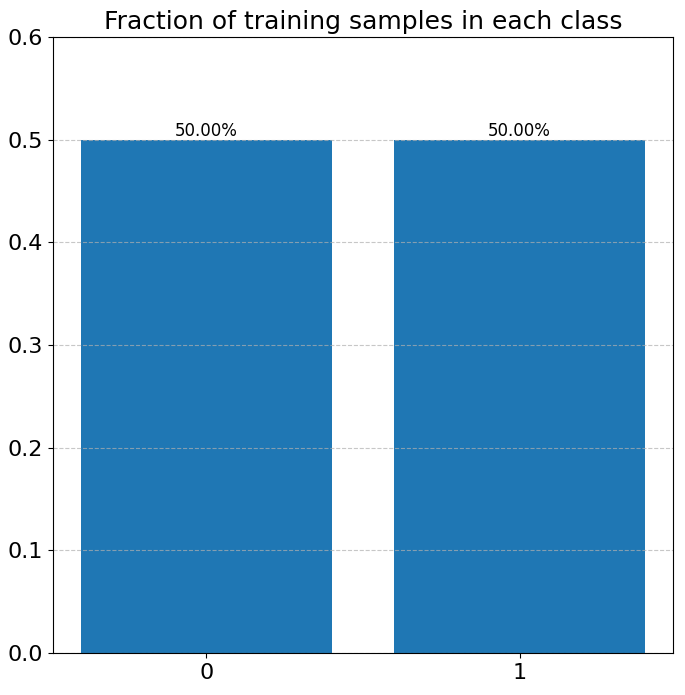}
    \caption{Training samples}
    \label{fig:samples and frequency approx-C2 d}
  \end{subfigure}
  \hspace{0.5em}
  \begin{subfigure}[t]{0.24\linewidth}
    \centering
    \includegraphics[width=\linewidth]{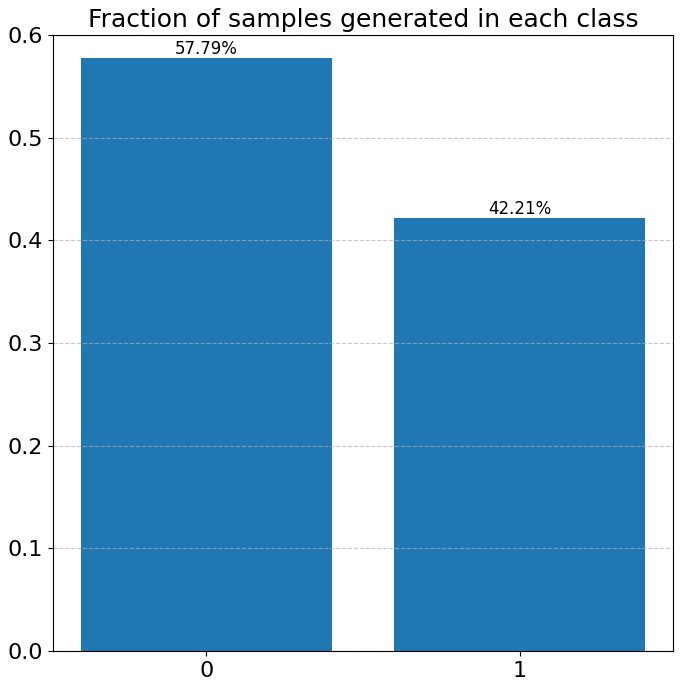}
    \caption{LSGM}
    \label{fig:samples and frequency approx-C2 e}
  \end{subfigure}
  \hspace{0.5em}
  \begin{subfigure}[t]{0.24\linewidth}
    \centering
    \includegraphics[width=\linewidth]{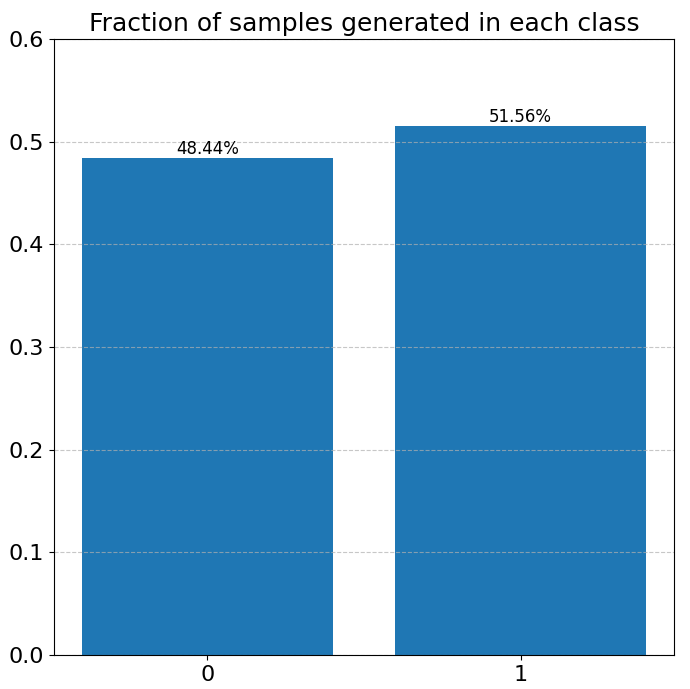}
    \caption{\textbf{LNDSM-SGM}}
    \label{fig:samples and frequency approx-C2 f}
  \end{subfigure}
  
  \caption{LSGM and LNDSM-SGM trained using the full Approx.-C2-MNIST dataset. (a) and (d): Snapshot and fraction of different digits of the training samples. (b) and (e): Snapshot and fraction of different digits of the 10,000 samples generated by the LSGM. (c) and (f): Snapshot and fraction of different digits of the 10,000 samples generated by the LNDSM-SGM.}
  \label{fig:samples and frequency approx-C2}
\end{figure}


\end{document}